\title{Reward-Poisoning Attacks on Offline Multi-Agent Reinforcement Learning}
\author{
    Young Wu,
    Jeremy McMahan,
    Xiaojin Zhu, and
    Qiaomin Xie
}
\newtheorem{thm}{Theorem}
\newtheorem{cor}[thm]{Corollary}
\newtheorem{lem}[thm]{Lemma}
\newtheorem{prop}[thm]{Proposition}
\theoremstyle{definition}
\newtheorem{df}{Definition}
\newtheorem{asm}{Assumption}
\theoremstyle{remark}
\newtheorem{rmk}{Remark}
\begin{document}

\global\long\def\mD{\mathcal{D}}%
\global\long\def\mDih{\mathcal{D}_{i,h}}%
\global\long\def\E{\mathbb{E}}%
\global\long\def\mA{\mathcal{A}}%
\global\long\def\mAi{\mathcal{A}_{i}}%
\global\long\def\mAni{\mathcal{A}_{-i}}%
\global\long\def\mM{\mathcal{M}}%
\global\long\def\mS{\mathcal{S}}%
\global\long\def\CIQ{\textup{CI}^{Q}}%
\global\long\def\CIV{\textup{CI}^{V}}%
\global\long\def\Nh{N_{h}}%
\global\long\def\NH{N_{H}}%
\global\long\def\Nl{\underline{N}}%
\global\long\def\Nlh{\underline{N}_{h}}%
\global\long\def\NlH{\underline{N}_{H}}%
\global\long\def\mG{G}%
\global\long\def\CIG{\textup{CI}^{G}}%
\global\long\def\Ckihs{C^{(k)}_{i,h,(s)}}%
\global\long\def\CkiHs{C^{(k)}_{i,H,(s)}}%
 
\global\long\def\aa{{\boldsymbol{a}}}%
\global\long\def\ai{a_{i}}%
\global\long\def\ani{a_{-i}}%
\global\long\def\aih{a_{i,h}}%
\global\long\def\anih{a_{-i,h}}%
\global\long\def\ak{{\boldsymbol{a}}^{(k)}}%
\global\long\def\aki{a^{(k)}_{i}}%
\global\long\def\akni{a^{(k)}_{-i}}%
\global\long\def\akh{{\boldsymbol{a}}^{(k)}_{h}}%
\global\long\def\akih{a^{(k)}_{i,h}}%
\global\long\def\aknih{a^{(k)}_{-i,h}}%
\global\long\def\skh{s^{(k)}_{h}}%
\global\long\def\skhh{s^{(k)}_{h+1}}%
\global\long\def\sk{s^{(k)}}%
\global\long\def\akH{{\boldsymbol{a}}^{(k)}_{H}}%
\global\long\def\akiH{a^{(k)}_{i,H}}%
\global\long\def\akniH{a^{(k)}_{-i,H}}%
\global\long\def\skH{s^{(k)}_{H}}%
 
\global\long\def\pih{{\boldsymbol{\pi}}_{h}}%
\global\long\def\pihi{\pi_{i,h}}%
\global\long\def\pihni{\pi_{-i,h}}%
\global\long\def\pidag{{\boldsymbol{\pi}}^{\dagger}}%
\global\long\def\pidagi{\pi_{i}^{\dagger}}%
\global\long\def\pidagni{\pi_{-i}^{\dagger}}%
\global\long\def\pidagh{{\boldsymbol{\pi}}_{h}^{\dagger}}%
\global\long\def\pidaghh{{\boldsymbol{\pi}}_{h+1}^{\dagger}}%
\global\long\def\pidagih{\pi_{i,h}^{\dagger}}%
\global\long\def\pidagnih{\pi_{-i,h}^{\dagger}}%
\global\long\def\pidagihh{\pi_{i,h+1}^{\dagger}}%
\global\long\def\piH{{\boldsymbol{\pi}}_{H}}%
\global\long\def\piHi{\pi_{i,H}}%
\global\long\def\piHni{\pi_{-i,H}}%
\global\long\def\pidagH{{\boldsymbol{\pi}}_{H}^{\dagger}}%
\global\long\def\pidagiH{\pi_{i,H}^{\dagger}}%
\global\long\def\pidagniH{\pi_{-i,H}^{\dagger}}%
 
\global\long\def\ro{r^{0}}%
\global\long\def\rr{{\boldsymbol{r}}}%
\global\long\def\rk{{\boldsymbol{r}}^{0,(k)}}%
\global\long\def\rki{r_{i}^{0,(k)}}%
\global\long\def\rkh{{\boldsymbol{r}}_{h}^{0,(k)}}%
\global\long\def\rkih{r_{i,h}^{0,(k)}}%
\global\long\def\rdag{r^{\dagger}}%
\global\long\def\rdagi{r_{i}^{\dagger}}%
\global\long\def\rdagk{{\boldsymbol{r}}^{\dagger,(k)}}%
\global\long\def\rdagki{r_{i}^{\dagger,(k)}}%
\global\long\def\rdagkh{{\boldsymbol{r}}_{h}^{\dagger,(k)}}%
\global\long\def\rdagkih{r_{i,h}^{\dagger,(k)}}%
\global\long\def\rdagknih{r_{-i,h}^{\dagger,(k)}}%
\global\long\def\rkH{{\boldsymbol{r}}_{H}^{0,(k)}}%
\global\long\def\rkiH{r_{i,H}^{0,(k)}}%
\global\long\def\rdagkH{{\boldsymbol{r}}_{H}^{\dagger,(k)}}%
\global\long\def\rdagkiH{r_{i,H}^{\dagger,(k)}}%
\global\long\def\rdagkniH{r_{-i,H}^{\dagger,(k)}}%
 
\global\long\def\Rdag{R^{\dagger}}%
\global\long\def\Rdagi{R_{i}^{\dagger}}%
\global\long\def\Rdagh{{\boldsymbol{R}}_{h}^{\dagger}}%
\global\long\def\Rdagih{R_{i,h}^{\dagger}}%
\global\long\def\Ri{R_{i}}%
\global\long\def\Rh{{\boldsymbol{R}}_{h}}%
\global\long\def\Rih{R_{i,h}}%
\global\long\def\Rhih{\hat{R}_{i,h}}%
\global\long\def\RdagH{{\boldsymbol{R}}_{H}^{\dagger}}%
\global\long\def\RdagiH{R_{i,H}^{\dagger}}%
\global\long\def\RH{{\boldsymbol{R}}_{H}}%
\global\long\def\RiH{R_{i,H}}%
\global\long\def\RhiH{\hat{R}_{i,H}}%
\global\long\def\CIR{\textup{CI}^{R^{\dagger}}}%
\global\long\def\CIRi{\textup{CI}_{i}^{R^{\dagger}}}%
\global\long\def\CIRh{\textup{CI}_{h}^{R^{\dagger}}}%
\global\long\def\CIRih{\textup{CI}_{i,h}^{R^{\dagger}}}%
\global\long\def\CIRhih{\textup{CI}_{i,h}^{R}}%
\global\long\def\CIRH{\textup{CI}_{H}^{R^{\dagger}}}%
\global\long\def\CIRiH{\textup{CI}_{i,H}^{R^{\dagger}}}%
\global\long\def\CIRhiH{\textup{CI}_{i,H}^{R}}%
 
\global\long\def\Rl{\underline{R}^{\dagger}}%
\global\long\def\Ru{\overline{R}^{\dagger}}%
\global\long\def\Rlih{\underline{R}_{i,h}^{\dagger}}%
\global\long\def\Ruih{\overline{R}_{i,h}^{\dagger}}%
 
\global\long\def\Ph{P_{h}}%
\global\long\def\Phh{\hat{P}_{h}}%
\global\long\def\CIP{\textup{CI}^{P}}%
\global\long\def\CIPh{\textup{CI}_{h}^{P}}%
 
\global\long\def\rhoR{\rho^{R}}%
\global\long\def\rhoP{\rho^{P}}%
\global\long\def\rhoRh{\rho^{R}_{h}}%
\global\long\def\rhoRH{\rho^{R}_{H}}%
\global\long\def\rhoPh{\rho^{P}_{h}}%
 
\global\long\def\Vi{V_{i}}%
\global\long\def\Vh{{\boldsymbol{V}}_{h}}%
\global\long\def\Vhh{{\boldsymbol{V}}_{h+1}}%
\global\long\def\Vih{V_{i,h}}%
\global\long\def\ViH{V_{i,H}}%
\global\long\def\Viz{V_{i,0}}%
\global\long\def\Vihh{V_{i,h+1}}%
\global\long\def\ViHH{V_{i,H+1}}%
\global\long\def\Vl{\underline{V}}%
\global\long\def\Vu{\overline{V}}%
 
\global\long\def\Qh{{\boldsymbol{Q}}_{h}}%
\global\long\def\Qhh{{\boldsymbol{Q}}_{h+1}}%
\global\long\def\Qih{Q_{i,h}}%
\global\long\def\Qihh{Q_{i,h+1}}%
\global\long\def\Ql{\underline{Q}}%
\global\long\def\Qu{\overline{Q}}%
\global\long\def\Qli{{\boldsymbol{\underline{Q}}}_{i}}%
\global\long\def\Qui{{\boldsymbol{\overline{Q}}}_{i}}%
\global\long\def\Qlh{{\boldsymbol{\underline{Q}}}_{h}}%
\global\long\def\Quh{{\boldsymbol{\overline{Q}}}_{h}}%
\global\long\def\Qlih{\underline{Q}_{i,h}}%
\global\long\def\Quih{\overline{Q}_{i,h}}%
\global\long\def\Qlihh{\underline{Q}_{i,h+1}}%
\global\long\def\Quihh{\overline{Q}_{i,h+1}}%
\global\long\def\QH{{\boldsymbol{Q}}_{H}}%
\global\long\def\QiH{Q_{i,H}}%
\global\long\def\QlH{{\boldsymbol{\underline{Q}}}_{H}}%
\global\long\def\QuH{{\boldsymbol{\overline{Q}}}_{H}}%
\global\long\def\QliH{\underline{Q}_{i,H}}%
\global\long\def\QuiH{\overline{Q}_{i,H}}%
\global\long\def\LUCBQih{\textup{LUCB-}Q_{i,h}}%
\global\long\def\LUCBQ{\textup{LUCB-}Q}%
 
\global\long\def\uu{\overline{u}}%
\global\long\def\vu{\overline{v}}%
\global\long\def\wu{\overline{w}}%
\global\long\def\ul{\underline{u}}%
\global\long\def\vl{\underline{v}}%
\global\long\def\wl{\underline{w}}%
\global\long\def\uuih{\overline{u}_{i,h}}%
\global\long\def\vuih{\overline{v}_{i,h}}%
\global\long\def\wuih{\overline{w}_{i,h}}%
\global\long\def\ulih{\underline{u}_{i,h}}%
\global\long\def\vlih{\underline{v}_{i,h}}%
\global\long\def\wlih{\underline{w}_{i,h}}%
\global\long\def\tki{t_{i}^{(k)}}%
\global\long\def\tkih{t_{i,h}^{(k)}}%
\global\long\def\tkiH{t_{i,H}^{(k)}}%
\global\long\def\diih{d_{i,h}^{\iota}}%
\global\long\def\diiH{d_{i,H}^{\iota}}%

\global\long\def\mup{\overline{m}^{+}}%
\global\long\def\mum{\overline{m}^{-}}%
\global\long\def\mlp{\underline{m}^{+}}%
\global\long\def\mlm{\underline{m}^{-}}%

\maketitle
\raggedbottom
\allowdisplaybreaks
\begin{abstract}
In offline multi-agent reinforcement learning (MARL), agents estimate policies from a given dataset. We study reward-poisoning attacks in this setting where an exogenous attacker modifies the rewards in the dataset before the agents see the dataset. The attacker wants to guide each agent into a nefarious target policy while minimizing the $L^p$ norm of the reward modification.
Unlike attacks on single-agent RL, we show that the attacker can install the target policy as a Markov Perfect Dominant Strategy Equilibrium (MPDSE), which rational agents are guaranteed to follow. This attack can be significantly cheaper than separate single-agent attacks. We show that the attack works on various MARL agents including uncertainty-aware learners, and we exhibit linear programs to efficiently solve the attack problem. We also study the relationship between the structure of the datasets and the minimal attack cost. Our work paves the way for studying defense in offline MARL.
\end{abstract}

\section{Introduction}



Multi-agent reinforcement learning (MARL) has achieved tremendous empirical success across a variety of tasks such as the autonomous driving, cooperative robotics, economic policy-making, and video games. In MARL, several agents interact with each other and the underlying environment, and each of them aims to optimize their individual long-term reward \cite{zhang2021multi}. Such problems are often formulated under the framework of Markov Games \cite{shapley1953stochastic}, which generalizes the Markov Decision Process model from single-agent RL. 
In offline MARL, the agents aim to learn a good policy by exploiting a pre-collected dataset without further interactions with the environment or other agents~\cite{pan2022plan,jiang2021offline,cui2022offline_MARL,zhong2022offline_MARL}. The optimal solution in MARL typically involves equilibria concepts. 


While the above empirical success is encouraging, MARL algorithms are susceptible to data poisoning attacks: the agents can reach the wrong equilibria if an exogenous attacker manipulates the feedback to agents. For example, a third party attacker may want to interfere with traffic to cause autonomous vehicles to behave abnormally; teach robots an incorrect procedure so that they fail at certain tasks; misinform economic agents about the state of the economy and guide them to make irrational investment or saving decisions; or cause the non-player characters in a video game to behave improperly to benefit certain human players. In this paper, we study the security threat posed by reward-poisoning attacks on offline MARL.
Here, the attacker wants the agents to learn a target policy $\pi^\dagger$ of the attacker's choosing ($\pi^\dagger$ does not need to be an equilibrium in the original Markov Game). Meanwhile, the attacker wants to minimize the amount of dataset manipulation to avoid detection and accruing high cost.
This paper studies optimal offline MARL reward-poisoning attacks.
Our work serves as a first step toward eventual defense against reward-poisoning attacks.



\subsection{Our Contributions}

We introduce reward-poisoning attacks in offline MARL.
We show that any attack that reduces to attacking single-agent RL separately must be suboptimal. Consequently, new innovations are necessary to attack effectively.
We present a reward-poisoning framework 
that guarantees the target policy $\pi^\dagger$ becomes a Markov Perfect Dominant Strategy Equilibrium (MPDSE) for the underlying Markov Game. 
Since any rational agent will follow an MPDSE if it exists, this ensures the agents adopt the target policy $\pi^\dagger$. 
We also show the attack can be efficiently constructed using a linear program.

The attack framework has several important features. First, it is effective against a large class of offline MARL learners rather than a specific learning algorithm. 
Second, the framework allows partially decentralized agents who can only access their own individual rewards rather than the joint reward vectors of all agents. 
Lastly, the framework only makes the minimal assumption on the rationality of the learners that they will not take dominated actions.

We also give interpretable bounds on the minimal cost to poison an arbitrary dataset. These bounds relate the minimal attack cost to the structure of the underlying Markov Game. Using these bounds, we derive classes of extremal games that are especially cheap or expensive for the attacker to poison. These results show which games may be more susceptible to an attacker, while also giving insight to the structure of multi-agent attacks.

In the right hands, our framework could be used by a benevolent entity to coordinate agents in a way that improves social welfare. However, a malicious attacker could exploit the framework to harm learners and only benefit themselves. Consequently, our work paves the way for future study of MARL defense algorithms. 

\subsection{Related Work}

\paragraph{Online Reward-Poisoning:}
Reward poisoning problem has been studied in various settings, including online single-agent reinforcement learners~\citep{banihashem2022admissible,  huang2019deceptive, liu2021provably, rakhsha2021policy, rakhsha2021reward, rakhsha2020policy, sun2020vulnerability, zhang2020adaptive}, as well as online bandits ~\citep{bogunovic2021stochastic, garcelon2020adversarial, guan2020robust, jun2018adversarial, liu2019data, lu2021stochastic, ma2018data, ming2020attack, zuo2020near}. Online reward poisoning for multiple learners is recently studied as a game redesign problem in ~\citep{ma2021game}. 


\paragraph{Offline Reward Poisoning:} 
\citet{ma2019policy, rakhsha2020policy, rakhsha2021policy, rangiunderstanding, zhang2008value, zhang2009policy} focus on adversarial attack on offline single-agent reinforcement learners. 
\citet{gleave2019adversarial, guo2021adversarial} study the poisoning attack on multi-agent reinforcement learners, assuming that the attacker controls one of the learners. 
Our model instead assumes that the attacker is not one of the learners, and the attacker wants to and is able to poison the rewards of all learners at the same time. Our model pertains to  many applications such as autonomous driving, robotics, traffic control and economic analysis, in which there is a central controller whose interests are not aligned with any of the agents and can modify the rewards and therefore manipulate all agents at the same time. 


\paragraph{Constrained Mechanism Design:} 
Our paper is also related to the mechanism design literature, in particular, the K-implementation problem in~\citet{monderer2004k, anderson2010internal}. Our model differs mainly in that the attacker, unlike a mechanism designer, does not alter the game/environment directly, but instead modifies the training data, from which the learners infer the underlying game and compute their policy accordingly.
In practical applications, rewards are often stochastic due to imprecise measurement and state observation, hence the mechanism design approach is not directly applicable to MARL reward poisoning. Conversely, constrained mechanism design can be viewed a special case when the rewards are deterministic and the training data has uniform coverage of all period-state-action tuples.

\paragraph{Defense against Attacks on Reinforcement Learning:} 
There is also recent work on defending against reward poisoning or adversarial attacks on reinforcement learning; examples include~\citet{banihashem2021defense, lykouris2021corruption, rangi2022saving, wei2022model, wu2022copa, zhang2021corruption, zhang2021robust}. These work focus on the single-agent setting where attackers have limited ability to modify the training data. We are not aware of defenses against reward poisoning in our offline multi-agent setting. Given the numerous real-world applications of offline MARL, we believe it is important to study the multi-agent version of the problem.



\section{Preliminaries}

\paragraph{Markov Games.}
 A finite-horizon general-sum $n$-player Markov Game is given by a tuple $\mG = (\mS, \mA, P, R, H, \mu)$ ~\citep{littman1994markov}. Here
$\mS$ is the finite state space, and $\mA=\mA_1\times\cdots\times\mA_n$ is the finite joint action space.
We use $\aa = (a_1,\ldots,a_n) \in \mA$ to represent a joint action of the $n$ learners; we sometimes write $\aa = (\ai, \ani)$ to emphasize that learner $i$ takes action $\ai$ and the other $n-1$ learners take joint action $\ani$.
For each period $h\in[H]$, $\Ph : \mS \times \mA \to  \Delta(\mS)$ is the transition function, where $\Delta(\mS)$ denotes the probability simplex on $\mS$, and $\Ph(s' | s, \aa)$ is the probability that the state is $s'$ in period $h + 1$ given the state is $s $ and the joint action is $\aa$ in period $h$. 
$\Rh : \mS \times \mA \to  \mathbb{R}^{n}$ is the mean reward function for the $n$ players, where $\Rih(s, \aa)$ denotes the scalar mean reward for player $i$ in state $s $ and period $h$ when the joint action $\aa$ is taken. The initial state distribution is $\mu$.

\paragraph{Policies and value functions.} We use $\pi$ to denote a deterministic Markovian \emph{policy} for the $n$ players, where $\pih : \mS \to  \mA$ is the policy in period $h$ and $\pih(s)$ specifies the joint action in state $s$ and period $h$. We write $\pih = (\pihi, \pihni)$, where $\pihi(s)$ is the action taken by learner $i$ and $\pihni(s)$ is the joint action taken by learners other than $i$ in state $s$ period $h$. 
The \emph{value} of a policy $\pi$ represents the expected cumulative rewards of the game assuming learners take actions according to $\pi$. Formally, the $Q$ value of learner $i$ in state $s$ in period $h$ under a joint action $\aa$ is given recursively by
 \begin{align*}
 \QiH^{\pi}\left(s, \aa\right) &= \RiH\left(s, \aa\right), \\ 
 \Qih^{\pi}\left(s, \aa\right) &= \Rih\left(s, \aa\right) 
 + \displaystyle\sum_{s' \in \mS} \Ph\left(s' | s, \aa \right) \Vihh^{\pi}\left(s'\right).
 \end{align*}
The value of learner $i$ in state $s$ in period $h$ under policy $\pi$ is given by
$\Vih^{\pi}\left(s\right) = \Qih^{\pi}\left(s, \pih\left(s\right)\right)$,
 and we use $\Vh^{\pi}(s) \in \mathbb{R}^{n}$ to denote the vector of values for all learners in state $s $ in period $h$ under policy $\pi$.

\paragraph{Offline MARL.} In offline MARL,
the learners are given a \emph{fixed} batch dataset $\mD$ that records historical plays of $n$ agents under some behavior policies, and no further sampling is allowed. We assume that $\mD = \big\{\big(\skh, \akh, \rkh\big)_{h=1}^{H}\big\}_{k=1}^{K}$ contains $K$ episodes of length $H$. The data tuple in period $h$ of episode $k$ consists of 
the state $\skh \in \mS$, the joint action profile $\akh \in \mA$, and reward vector $\rkh \in \mathbb{R}^{n}$,
where the superscript $0$ denotes the original rewards before any attack. The next state $\skhh$ can be found in the next tuple.
Given the shared data $\mD$, each learner independently constructs a  policy $\pi_i$ to maximize their own cumulative reward. 
They then behave according to the resulting joint policy ${\pi}=(\pi_1, \ldots, \pi_n)$ in future deployment. Note that in a multi-agent setting, the learners' optimal solution concept is typically an approximate Nash equilibrium or Dominant Strategy Equilibrium~\cite{cui2022offline_MARL,zhong2022offline_MARL}.


An agent's access to $\mD$ may be limited, for example due to privacy reasons. There are multiple levels of accessibility. In the first level, the agents can only access data that directly involves itself: instead of the tuple $(s_h, \aa_h, \rr_h)$, agent~$i$ would only be able to see $(s_h, a_{i,h}, r_{i,h})$. 
In the second level, agent~$i$ can see the joint action but only its own reward: $(s_h, \aa_h, r_{i,h})$. 
In the third level, agent~$i$ can see the whole $(s_h, \aa_h, \rr_h)$. 
We focus on the second level in this paper.

Let $\Nh\left(s, \aa\right) = \sum_{k=1}^{K} \mathbf{1}_{\{\skh = s, \akh = \aa\}}$
be the total number of episodes containing $\left(s , \aa, \cdot \right)$ in period $h$. 
We consider a dataset $\mD$ that satisfies the following coverage assumption. 

\begin{asm} \label{asm:uca} 
(Full Coverage) For each $(s , \aa)$ and $h$, $\Nh\left(s, \aa\right) > 0$.
\end{asm}
While this assumption might appear strong, we later show that it is necessary to effectively poison the dataset.


\subsection{Attack Model}


We assume that the attacker has access to the original dataset $\mD$. The attacker has a pre-specified target policy $\pidag$ and attempts to poison the rewards in $\mD$ with the goal of forcing the learners to learn $\pidag$ from the poisoned dataset. The attacker also desires that the attack has minimal cost.
We let $C(\ro, \rdag)$ denote the cost of a specific poisoning, where $\ro = \big\{\big(\rkh\big)_{h=1}^{H}\big\}_{k=1}^{K}$ are the original rewards and $\rdag = \big\{\big(\rdagkh\big)_{h=1}^{H}\big\}_{k=1}^{K}$ are the poisoned rewards.
We focus on the $L^1$-norm cost $C(\ro, \rdag)=\|\ro-\rdag\|_1$.

\paragraph{Rationality.} For generality, the attacker makes minimal assumptions on the learners' rationality. Namely, the attacker only assumes that the learners never take dominated actions~\citep{monderer2004k}. For technical reasons, we strengthen this assumption slightly by introducing an arbitrarily small margin $\iota>0$ (e.g. representing the learners' numerical resolution).
 \begin{df} \label{df:mpdse} 
 A $\iota$-strict Markov perfect dominant strategy equilibrium ($\iota$-MPDSE) of a Markov Game $\mG$ is a policy $\pi$ satisfying that for all learners $i \in \left[n\right]$, periods $h \in \left[H\right]$, and states $s  \in \mS$,
 \begin{align*}
 \forall\, \ai \in \mAi, \ai \neq  \pihi(s), \ani &\in \mAni: \\
 \Qih^{\pi}\left(s, \left(\pihi(s), \ani\right)\right) &\ge  \Qih^{\pi}\left(s, \left(\ai, \ani\right)\right) + \iota .
 \end{align*}
 \end{df} 
Note that a strict MPDSE, if exists, must be unique.
\begin{asm}(Rationality)
The learners will play an $\iota$-MPDSE should one exist.
\end{asm}

\paragraph{Uncertainty-aware attack.} State-of-the-art MARL algorithms are typically uncertainty-aware \cite{cui2022offline_MARL,zhong2022offline_MARL}, meaning that learners are cognizant of the model uncertainty due to finite, random data and will calibrate their learning procedure accordingly. 
The attacker accounts for such uncertainty-aware learners, but does not know the learners' specific algorithm or internal parameters. It only assumes that the policies computed by the learners are solutions to some game that is plausible given the dataset. Accordingly, the attacker aims to poison the dataset in such a way that the target policy is an  $\iota$-MPDSE for every game that is plausible for the poisoned dataset. 

To formally define the set of plausible Markov Games for a given dataset $\mD$, we first need a few definitions.
 \begin{df}(Confidence Game Set) \label{df:cbp} 
 The confidence set on the transition function $\Ph\left(s, \aa\right)$ has the form:
 \begin{align*}
 \CIPh\left(s, \aa\right) := &\big\{\Ph\left(s, \aa\right) \in \Delta\left(\mA\right) : \\ &\|\Ph\left(s, \aa\right) - \Phh\left(s, \aa\right)\|_{1} \leq  \rhoPh\left(s, \aa\right)\big\} 
 \end{align*}
 where
 
 $\Phh(s' | s, \aa) := \dfrac{1}{\Nh(s, \aa)} \sum_{k=1}^{K} \mathbf{1}_{\{\skhh = s', \skh = s, \akh = \aa\}}$
 is the maximum likelihood estimate (MLE) of the true transition probability.
Similarly, the confidence set on the reward function $\Rih\left(s, \aa\right)$ has the form:
 \begin{align*}
 \CIRhih\left(s, \aa\right) := &\big\{\Rih\left(s, \aa\right) \in \left[-b, b\right]: \\
 & | \Rih\left(s, \aa\right) - \Rhih\left(s, \aa\right) | \leq  \rhoRh\left(s, \aa\right)\big\},
 \end{align*}
where
$\Rhih(s, \aa) := \dfrac{1}{\Nh(s, \aa)} \sum_{k=1}^{K} \rkih \mathbf{1}_{\{\skh = s, \akh = \aa\}}$
is the MLE of the reward. Then, the set of all plausible Markov Games consistent with $\mD$, denoted by $\CIG$, is defined to be:
 \begin{align*}
 \CIG := \big\{G = \left(\mS, \mA, P, R, H, \mu\right): \Ph\left(s, \aa\right) \in \CIPh\left(s, \aa\right), \\
 \Rih\left(s, \aa\right) \in \CIRhih\left(s, \aa\right), \forall\, i, h, s, \aa\big\}.
 \end{align*}
\end{df}

Note that both the attacker and the learners know that all of the rewards are bounded within $\left[-b, b\right]$ (we allow $b = \infty$). 
The values of $\rhoPh\left(s, \aa\right)$ and $\rhoRh\left(s, \aa\right)$ are typically given by concentration inequalities. One standard choice takes the Hoeffding-type form $\rhoPh\left(s, \aa\right) \propto 1/{\sqrt{\max\{N_h(s,a),1\}}},$ and $\rhoRh\left(s, \aa\right) \propto 1/{\sqrt{\max\{N_h(s,a),1\}}},$ where we recall that $N_h(s,a)$ is the visitation count of the state-action pair $(s,a)$ \cite{xie2020cce,cui2022offline_MARL,zhong2022offline_MARL}. We remark that with proper choice of $\rhoPh$ and $\rhoRh$, $\CIG$ contains the game constructed by optimistic MARL algorithms with upper confidence bounds \cite{xie2020cce}, as well as that by pessimistic algorithms with lower confidence bounds \cite{cui2022offline_MARL,zhong2022offline_MARL}. See the appendix for details. 


With the above definition, we consider an attacker that attempts to modify the original dataset $\mD$ into $\mD^{\dagger}$ so that $\pidag$ is an $\iota$-MPDSE for every plausible game in $\CIG$ induced by the poisoned $\mD^{\dagger}$. This would guarantee the learners adopt~$\pidag$. 

The full coverage Assumption~\ref{asm:uca} is necessary for the above attack goal, as shown in the following proposition. We defer the proof to appendix.
\begin{prop} \label{prop:ucareq} 
If $\Nh\left(s, \aa\right) = 0$ for some $\left(h, s, \aa\right)$, then there exist MARL learners for which the attacker's problem is infeasible.
\end{prop}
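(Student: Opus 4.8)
The plan is to prove infeasibility by exhibiting one hard instance rather than reasoning about all datasets at once. The load-bearing observation is that a period-state-action triple that is \emph{unvisited} in $\mD$ contributes no reward record, so after any poisoning $\mD^{\dagger}$ its reward MLE $\hat R_{i,h}$ and its reward confidence interval are unchanged --- the attacker has no handle on them at all. For an uncertainty-aware learner that treats an unseen triple as maximally uncertain (a natural choice, and one permitted by Definition~\ref{df:cbp}), the confidence interval there is the entire admissible range $[-b,b]$. Since the attacker moreover cannot make any \emph{visited} reward exceed $b$, a game that is still plausible after the attack can always make a non-target action look at least as good as the target action at the unseen triple, and this kills the strict dominance margin $\iota$.

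Concretely I would take $H=1$, $n=2$, one state $\sigma$ (so transitions are vacuous and $\mu=\delta_\sigma$), $\mA_1=\{\alpha,\beta\}$, and a trivial $\mA_2=\{\gamma\}$; let $\mD$ consist of $K$ episodes all playing $(\alpha,\gamma)$ in $\sigma$, so $N_1(\sigma,(\alpha,\gamma))=K>0$ while $N_1(\sigma,(\beta,\gamma))=0$, i.e.\ Assumption~\ref{asm:uca} fails precisely at $\bigl(1,\sigma,(\beta,\gamma)\bigr)$. Let the target $\pidag$ have $\pi^{\dagger}_{1,1}(\sigma)=\alpha$, so the deficient triple belongs to the \emph{non-target} action $\beta$. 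Since $H=1$ makes $Q_{1,1}^{\pidag}(\sigma,\aa)=R_{1,1}(\sigma,\aa)$, the $\iota$-MPDSE requirement of Definition~\ref{df:mpdse} collapses, for player~$1$, to the single inequality
\[
R_{1,1}\bigl(\sigma,(\alpha,\gamma)\bigr)\ \ge\ R_{1,1}\bigl(\sigma,(\beta,\gamma)\bigr)+\iota ,
\]
and is vacuous for the trivial player~$2$. For the learners I would choose a confidence radius at the unvisited triple large enough (at least $2b$, or $\infty$ if $b=\infty$) that the reward confidence interval there equals $[-b,b]$, and a finite radius at $\bigl(\sigma,(\alpha,\gamma)\bigr)$ (e.g.\ the Hoeffding value for count $K$); this yields a legitimate plausible-game set $\CIG$.

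The core step is then the following. Fix an arbitrary poisoned dataset $\mD^{\dagger}$ and let $c_0:=\hat R_{1,1}\bigl(\sigma,(\alpha,\gamma)\bigr)$ be the poisoned reward mean at the visited triple; being an average of rewards in $[-b,b]$, it lies in $[-b,b]$, hence in the confidence interval at $\bigl(\sigma,(\alpha,\gamma)\bigr)$ and also in the confidence interval $[-b,b]$ at $\bigl(\sigma,(\beta,\gamma)\bigr)$. Let $G^{*}$ be the game in the plausible set $\CIG$ of $\mD^{\dagger}$ with $R_{1,1}(\sigma,(\alpha,\gamma))=R_{1,1}(\sigma,(\beta,\gamma))=c_0$ and every other entry chosen arbitrarily within its confidence set. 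For $G^{*}$ the displayed inequality reads $c_0\ge c_0+\iota$, which is false; so $\pidag$ is not an $\iota$-MPDSE for $G^{*}\in\CIG$. Since $\mD^{\dagger}$ was arbitrary, no poisoning can make $\pidag$ an $\iota$-MPDSE for every plausible game, i.e.\ the attacker's problem is infeasible for these learners.

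I expect the real crux to be conceptual rather than computational: recognizing that an unvisited triple is entirely outside the attacker's control, that the model genuinely allows learners whose confidence interval there spans the full reward range, and that the strict margin $\iota>0$ then cannot be met because the competing reward can always be set equal to the target reward in a plausible game. One should also check carefully that the constructed learner is admissible under Definition~\ref{df:cbp} and that the single violated inequality suffices no matter how the remaining entries of $G^{*}$ are filled in. Finally, to argue that a coverage hole at \emph{any} period is fatal (not only at the last one), I would pad the example with $h-1$ leading and $H-h$ trailing single-state/single-action zero-reward periods, which keeps every padded triple covered and reduces the general case to the $H=1$ argument above.
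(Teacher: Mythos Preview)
Your core mechanism is right and matches the paper's: an unvisited triple is beyond the attacker's reach, and a learner who treats it as maximally uncertain (confidence interval $[-b,b]$, or equivalently an adversarial default imputation) can always harbor a plausible game in which the $\iota$-margin fails at that triple. The paper's own proof is a one-paragraph version of exactly this: it simply says the learner may impute $\hat R_{i,h}(s,\aa)=-b$ when $\aa$ coincides with the target at that stage and $\hat R_{i,h}(s,\aa)=b$ otherwise, whence the attacker cannot make $\pidagh(s)$ dominant.

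The gap in your write-up is one of quantification, not of idea. The proposition is a conditional about the \emph{given} attacker's problem: if the dataset $\mD$ (which defines $N_h$) has a hole at some $(h,s,\aa)$, then for \emph{that} $\mD$ and the \emph{given} target $\pidag$ there exist learners making the attack infeasible. You instead build your own $\mS,\mA,\mD,\pidag$ from scratch and show one hard instance; that establishes only the weaker existential statement ``there is \emph{some} dataset with a hole for which attack fails,'' not the proposition as stated. Your padding remark at the end lets $h$ vary but still manufactures the surrounding game rather than accepting the one handed to you. The fix is immediate: drop the toy construction and run your ``core step'' argument directly on the given $(h,s,\aa)$ with $N_h(s,\aa)=0$. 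Split on whether $a_i=\pidagih(s)$ for the relevant player $i$; in either case the full interval $[-b,b]$ at the hole lets you pick a plausible game with reward there equal to the attacker's best achievable value on the comparison entry, collapsing the margin to $0<\iota$. This is precisely the paper's two-case argument, and your $\CIG$-based phrasing of it is arguably cleaner than the paper's informal ``default value'' language.
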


\section{Poisoning Framework}

In this section, we first argue that naively applying single-agent poisoning attack separately to each agent results in suboptimal attack cost. We then present a new optimal poisoning framework that accounts for multiple agents and thereby allows for efficiently solving the attack problem. 

\paragraph{Suboptimality of single-agent attack reduction.}
As a first attempt, the attacker could try to use existing single-agent RL reward poisoning methods. However, this approach is doomed to be suboptimal. Consider the following game with $n=2$ learners, one period and one state:

\begin{center} \begin{tabular}{|c|c|c|c|}
\hline
 $\mA_{1} \setminus  \mA_{2}$ &$1$ &$2$\\ \hline
$1$ &$\left(3, 3\right)$ &$\left(1, 2\right)$\\ \hline
$2$ &$\left(2, 1\right)$ &$\left(0, 0\right)$\\ \hline
\end{tabular} \end{center}

Suppose that the original dataset $\mD$ has full coverage. For simplicity, we assume that each $(s,\aa)$ pair appears sufficiently many times so that  $\rho^R$ is small. In this case, the target policy $\pidag = \left(1, 1\right)$ is already a MPDSE, so no reward modification is needed. However, if we use a single-agent approach, each learner $i$ will observe the following dataset:
\begin{center} \begin{tabular}{|c|c|c|}
\hline
 $\mAi$ &$r$\\ \hline
$1$ &$\left\{3, 1\right\}$\\ \hline
$2$ &$\left\{2, 0\right\}$\\ \hline
\end{tabular} \end{center}
In this case, to learner $i$ it is not immediately clear which of the two actions is strictly better, for example, when ${1, 2}$ appears relatively more often then ${3, 0}$. To ensure that both players take action 1, the attacker needs to modify at least one of the rewards for each player, thus incurring a nonzero (and thus suboptimal) attack cost.

The example above shows that a new approach is needed to construct an optimal poisoning framework tailored to the multi-agent setting. Below we develop such a framework, first for the simple Bandit Game setting, which is then generalized to Markov Games.

\subsection{Bandit Game Setting}

As a stepping stone, we start with a subclass of Markov Games with $|\mS|=1$ and $H=1$, which are sometimes called bandit games. A bandit game consists of a single stage normal-form game. For now, we also pretend that the learners simply use the data to compute an MLE point estimate $\hat{\mG}$ of the game and then solve the estimated game $\hat{\mG}$. This is unrealistic, but it highlights the attacker's strategy to enforce that $\pidag$ is an $\iota$-strict DSE in $\hat{\mG}$. 

Suppose the original dataset is 
 $\mD = \big\{(\ak, \rk)\big\}_{k=1}^{K} $ (recall we no longer have state or period). Also, let $N(\aa) := \sum_{k=1}^{K} \mathbf{1}_{\{\ak = \aa\}}$ be the action counts. The attacker's problem can be formulated as a convex optimization problem given in~\eqref{eq:prb1}.

\begin{equation}\label{eq:prb1}
 \begin{aligned}
 \displaystyle\min_{\rdag} \; &   C\big(\ro, \rdag\big) 
 \\ \text{\;s.t.\;} &  \Rdag(\aa) \!:=\! \dfrac{1}{N(\aa)} \displaystyle\sum_{k=1}^{K} \rdagk \mathbf{1}_{\left\{\ak = \aa\right\}},\forall\, \aa ;
 \\ &  \Rdagi\big(\pidagi, \ani\big) \geq  \Rdagi\left(\ai, \ani\right) + \iota, 
  \forall\; i, \ani, \ai 
 \neq  \pidagi ;
 \\ &  \rdagk \in \left[-b, b\right]^{n}, \forall\; k. 
\end{aligned}
\end{equation}
The first constraint in~\eqref{eq:prb1} models the learners' MLE $\hat{\mG}$ after poisoning. The second constraint enforces that $\pidag$ is an $\iota$-strict DSE of $\hat{\mG}$ by definition.
We observe that: 
\begin{enumerate}
    \item The problem is feasible if $\iota \leq  2 b$, since the attacker can always set, for each agent, the reward to be $b$ for the target action  and  $-b$ for all other actions;
    \item If the cost function $C(\cdot,\cdot)$ is the $L^1$-norm, the problem is a linear program (LP) with $n K$ variables and $(A - 1) A^{n-1} + 2 n K$ inequality constraints (assuming each learner has $\left| \mAi \right| = A $ actions);
    \item After the attack, learner~$i$ only needs to see its own rewards to be convinced that $\pidagi$ is a dominant strategy; learner~$i$ does not need to observe other learners' rewards.  
\end{enumerate}
This simple formulation serves as an asymptotic approximation to the attack problem for confidence bound based learners. In particular, when $N(\aa)$ is large for all $\aa$, the confidence intervals on $P$ and $R$ are usually small. 
 
With the above idea in place, we can consider more realistic learners that are uncertainty-aware.
For these learners, the attacker attempts to enforce an $\iota$ separation between the lower bound of the target action's reward and the upper bounds of all other actions' rewards (similar to arm elimination in bandits). With such separation, all plausible games in $\CIG$ would have the target action profile as the dominant strategy equilibrium. This approach can be formulated as a slightly more complex optimization problem~\eqref{eq:prb2}, where the second and third constraints enforce the desired $\iota$ separation. The formulation~\eqref{eq:prb2} can be solved using standard optimization solvers, hence the optimal attack can be computed efficiently. 

\begin{equation}\label{eq:prb2}
 \begin{aligned}
 \displaystyle\min_{\rdag} \; &   C(\ro, \rdag) 
 \\ \text{s.t.\;} &  \Rdag(\aa) := \dfrac{1}{N(a)} \displaystyle\sum_{k=1}^{K} \rdagk \mathbf{1}_{\left\{\ak = a\right\}}, \forall\; \aa ;
 \\ &  \CIRi(\aa) := \big\{\Ri(\aa) \in [-b, b] : \big| \Ri(\aa) - \Rdagi(\aa) \big| \\
 &\hspace{2cm}\leq  \rhoR(\aa)\big\}, \quad\; \forall\; i, \aa ;
 \\ &  \displaystyle\min_{\Ri \in \CIRi\!(\pidagi, \ani)} \! \Ri \geq \! \displaystyle\max_{\Ri \in \CIRi\!(\ai, \ani)} \Ri + \iota,  \\
 &\hspace{4cm} \forall\; i, \ani, \ai \neq  \pidagi ;
 \\ &  \rdagk \in \left[-b, b\right]^{n}, \forall\; k. 
 \end{aligned}
 \end{equation}


We next consider whether this formulation has a feasible solution. Below we characterize the feasibility of the attack in terms of the margin parameter $\iota$ and the confidence bounds.
\begin{prop} \label{prop:bfeas} 
The attacker's problem~\eqref{eq:prb2} is feasible if
$\iota \leq  2 b - 2 \rhoR\left(\aa\right), \forall\; \aa \in \mA.$
\end{prop}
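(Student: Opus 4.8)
The plan is to exhibit an explicit feasible solution to~\eqref{eq:prb2} under the hypothesis $\iota \le 2b - 2\rhoR(\aa)$ for all $\aa$. The natural candidate mirrors item~1 in the discussion after~\eqref{eq:prb1}: for each learner $i$, poison every reward so that its empirical mean $\Rdagi(\aa)$ is $b$ on all action profiles $\aa$ with $\ai = \pidagi$, and $-b$ on all action profiles with $\ai \ne \pidagi$. Concretely, set $\rdagki := b$ whenever $\aki = \pidagi$ and $\rdagki := -b$ whenever $\aki \ne \pidagi$, for every episode $k$ and every learner $i$. Since the poisoned reward in an episode depends only on the (fixed, observed) action profile $\ak$, all episodes sharing the same joint action $\aa$ receive the same reward vector, so the first constraint of~\eqref{eq:prb2} gives $\Rdagi(\aa) = b$ if $\ai = \pidagi$ and $\Rdagi(\aa) = -b$ otherwise. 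The fourth constraint $\rdagk \in [-b,b]^n$ holds by construction.

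The main work is checking the third constraint, the $\iota$-separation between the worst case of the target action and the best case of any deviating action. Fix $i$, fix $\ani$, and fix $\ai \ne \pidagi$. The target profile $(\pidagi,\ani)$ has $\Rdagi = b$, so by the definition of $\CIRi$ in the second constraint, $\min_{\Ri \in \CIRi(\pidagi,\ani)} \Ri \ge b - \rhoR(\pidagi,\ani)$; note the lower endpoint is also clipped to $[-b,b]$, but $b - \rhoR \ge -b$ holds since $\rhoR \le 2b$ (which follows from the hypothesis with $\iota > 0$). The deviating profile $(\ai,\ani)$ has $\Rdagi = -b$, so $\max_{\Ri \in \CIRi(\ai,\ani)} \Ri \le -b + \rhoR(\ai,\ani)$. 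Hence the left side minus the right side is at least
\[
\big(b - \rhoR(\pidagi,\ani)\big) - \big(-b + \rhoR(\ai,\ani)\big) = 2b - \rhoR(\pidagi,\ani) - \rhoR(\ai,\ani) \ge 2b - 2\max_{\aa}\rhoR(\aa) \ge \iota,
\]
using the hypothesis $\iota \le 2b - 2\rhoR(\aa)$ for all $\aa$. This establishes the third constraint for every $i, \ani, \ai \ne \pidagi$, completing feasibility.

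I do not anticipate a genuine obstacle here; the only subtlety is being careful about the clipping to $[-b,b]$ inside $\CIRi$ (so that the claimed lower/upper endpoints are actually attained) and, if $b = \infty$ is permitted, noting that the hypothesis is vacuously satisfied and one can instead use any sufficiently large finite values in place of $\pm b$ to achieve the separation. It may also be worth remarking that this construction only uses the observed action profiles and never the original rewards, so feasibility is entirely governed by $\iota$ versus the confidence radii — which is exactly the content of the proposition.
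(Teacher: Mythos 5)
Your construction is correct and is essentially the paper's own argument: the paper proves this proposition by deferring to Theorem~\ref{thm:feas}, whose proof uses exactly the same extreme attack (set $r_i^{\dagger,(k)} = b$ when learner $i$ plays the target action and $-b$ otherwise) and then verifies the $\iota$-separation from the resulting confidence-interval endpoints. Your argument is simply the $H = |\mS| = 1$ specialization of that proof, where the $(H+1)\rhoR$ term collapses to $2\rhoR$, and your handling of the $[-b,b]$ clipping matches the paper's.
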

Proposition~\ref{prop:bfeas} is a special case of the general Theorem~\ref{thm:feas} with $H = \left| \mS \right| = 1$. 
We note that the condition in Proposition~\ref{prop:bfeas} has an equivalent form that relates to the structure of the dataset. We later present this form for more general case.

When an $L^1$-norm cost function is used, we show in the appendix that the formulation~\eqref{eq:prb2} can also be efficiently solved.
\begin{prop} \label{prop:blp} 
With $L^1$-norm cost function $C\left(\cdot , \cdot \right)$, the problem~\eqref{eq:prb2} can be formulated as a linear program.
\end{prop}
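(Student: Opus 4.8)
The plan is to reformulate the optimization~\eqref{eq:prb2} by eliminating the inner $\min$ and $\max$ over the confidence sets in closed form, after which all constraints become linear in the decision variables $\rdagk$ and the cost $\|\ro - \rdag\|_1$ is linearized in the standard way. First I would observe that $\CIRi(\aa)$ is the interval $[\,\max\{-b, \Rdagi(\aa) - \rhoR(\aa)\},\ \min\{b, \Rdagi(\aa) + \rhoR(\aa)\}\,]$. Hence
\begin{align*}
\min_{\Ri \in \CIRi(\pidagi, \ani)} \Ri &= \max\big\{-b,\ \Rdagi(\pidagi, \ani) - \rhoR(\pidagi, \ani)\big\}, \\
\max_{\Ri \in \CIRi(\ai, \ani)} \Ri &= \min\big\{b,\ \Rdagi(\ai, \ani) + \rhoR(\ai, \ani)\big\}.
\end{align*}
Since the separation constraint requires the left-hand quantity to be at least the right-hand quantity plus $\iota$, and since a max (resp.\ min) is $\ge$ (resp.\ $\le$) each of its arguments, the constraint
$\min_{\Ri \in \CIRi(\pidagi, \ani)} \Ri \ge \max_{\Ri \in \CIRi(\ai, \ani)} \Ri + \iota$
is equivalent to the single linear inequality
\[
\Rdagi(\pidagi, \ani) - \rhoR(\pidagi, \ani) \ \ge\ \Rdagi(\ai, \ani) + \rhoR(\ai, \ani) + \iota,
\]
because the $[-b,b]$ clipping on the lower endpoint can only raise it and on the upper endpoint can only lower it, so clipping never makes a feasible point infeasible, and conversely the unclipped inequality already implies the clipped one. (One should note the boundary case: if the unclipped left endpoint exceeds $b$ it is clipped to $b$, but then the clipped inequality $b \ge \Rdagi(\ai,\ani) + \rhoR(\ai,\ani) + \iota$ still follows from $b \ge \Rdagi(\pidagi,\ani) \ge \Rdagi(\pidagi,\ani) - \rhoR$; the symmetric case is analogous.) This turns the third constraint block of~\eqref{eq:prb2} into $(A-1)A^{n-1}$ linear inequalities per agent.

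Next I would substitute the definition $\Rdagi(\aa) = \frac{1}{N(\aa)} \sum_k \rdagki \mathbf{1}_{\{\ak = \aa\}}$ from the first constraint directly into these inequalities, so that they become linear in the episode-level reward variables $\rdagki$; the box constraints $\rdagk \in [-b,b]^n$ are already linear. Finally, to linearize the $L^1$ objective $C(\ro,\rdag) = \sum_{k,i} |\rdagki - \rki|$, I introduce auxiliary variables $t_{i}^{(k)}$ with the pair of linear constraints $t_i^{(k)} \ge \rdagki - \rki$ and $t_i^{(k)} \ge \rki - \rdagki$, and minimize $\sum_{k,i} t_i^{(k)}$; at optimality $t_i^{(k)} = |\rdagki - \rki|$. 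Collecting everything yields a linear program in the $2nK$ variables $(\rdagki, t_i^{(k)})$ with $O(nK + n A^{n-1})$ linear constraints, which proves the claim.

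The only real obstacle is the handling of the $[-b,b]$ clipping inside the $\min$/$\max$ definitions of the confidence endpoints; if $b = \infty$ this is vacuous, but for finite $b$ one must check carefully that replacing the clipped separation inequality by the unclipped affine inequality preserves the feasible set (the argument sketched above), rather than merely being a relaxation. Everything else is the routine epigraph reformulation of an $L^1$ norm and direct substitution, so I would state those steps briefly and spend the bulk of the proof on the equivalence of the clipped and unclipped separation constraints.
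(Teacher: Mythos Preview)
Your end result is correct and in fact cleaner than the paper's: once you have the box constraints $\rdagk \in [-b,b]^n$ (so each $\Rdagi(\aa)\in[-b,b]$), the clipped separation constraint is equivalent to the single affine inequality $\Rdagi(\pidagi,\ani)-\rhoR(\pidagi,\ani)\ge \Rdagi(\ai,\ani)+\rhoR(\ai,\ani)+\iota$, and the $L^1$ objective linearizes by the standard epigraph trick. The paper takes a different route: it keeps the clipping and linearizes it by writing $\max\{a,c\}=\min\{a,c\}+|a-c|$ (and the symmetric identity for $\min$), then introduces four nonnegative slack variables $\overline{m}^{\pm},\underline{m}^{\pm}$ per constraint to handle the resulting absolute values. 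That yields a correct but substantially larger LP. Your simplification is even the one the authors themselves adopt later: in the Markov-game LP for Theorem~\ref{thm:LP 4.1.2} they write ``We ignore the boundary clipping on the confidence bounds for this problem to simplify the notations,'' which is exactly your observation.

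That said, your argument for the equivalence has the direction backwards. You twice argue unclipped $\Rightarrow$ clipped (since $\max\{-b,x\}\ge x$ and $\min\{b,y\}\le y$), which shows that the unclipped inequality is a \emph{restriction}; the worry should be that it might be a \emph{strict} restriction, not ``merely a relaxation'' as you write. The direction you actually need is clipped $\Rightarrow$ unclipped, and this is where the box constraints are used: if $\max\{-b,R_1-\rho_1\}\ge \min\{b,R_2+\rho_2\}+\iota$ with $R_1,R_2\in[-b,b]$, $\rho_1,\rho_2\ge 0$, $\iota>0$, then $R_2+\rho_2\ge -b$ forces the right side to be at least $-b+\iota>-b$, so the left clipping is inactive and the left side equals $R_1-\rho_1\le b$; this in turn forces $\min\{b,R_2+\rho_2\}\le b-\iota<b$, so the right clipping is also inactive. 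Hence the clipped and unclipped constraints coincide. Your parenthetical about ``the unclipped left endpoint exceeds $b$'' is also off: $R_1-\rho_1\le R_1\le b$ always, and the relevant clipping on that endpoint is from below at $-b$, not from above.
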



\subsection{Markov Game Setting}

We now generalize the ideas from the bandit setting to derive a poisoning framework for arbitrary Markov Games. With multiple states and periods, there are two main complications:
\begin{enumerate}
    \item In each period $h$, the learners' decision depends on $Q_h$, which involves both the immediate reward $R_h$ and the future return $Q_{h+1}$;
    \item The uncertainty in $Q_h$ amplifies as it propagates backward in $h$.
\end{enumerate}
Accordingly, the attacker needs to design the poisoning attack recursively. 

Our main technical innovation is an attack formulation based on \emph{$Q$ confidence-bound backward induction}. The attacker maintains confidence upper and lower bounds on the learners' $Q$ function, $\Qu$ and $\Ql$, with backward induction. To ensure $\pidag$ becomes an $\iota$-MPDSE, the attacker again attempts to $\iota$-separate the lower bound of the target action and the upper bound of all other actions, at all states and periods.

 
Recall Definition~\ref{df:cbp}: 
given the training dataset $\mD$, one can compute the MLEs $\Rh$ and corresponding confidence sets $\CIRhih$ for the reward. The attacker aims to poison $\mD$ into $\mD^{\dagger}$ so that the MLEs and confidence sets become $\Rdagh$ and $\CIRih$, under which $\pidag$ is the unique $\iota$-MPDSE for all plausible games in the corresponding confidence game set. The attacker finds the minimum cost way of doing so by solving a $Q$ confidence-bound backward induction optimization problem, given in \eqref{eq:prb4}--\eqref{eq:prb4end}.
 \begin{align}
 \displaystyle\min_{\rdag} \; &   C\left(\ro, \rdag\right) \label{eq:prb4} 
 \\ \text{\;s.t.\;} &  \Rdagih\left(s, \aa\right) := \dfrac{1}{\Nh\left(s, \aa\right)} \displaystyle\sum_{k=1}^{K} \rdagkih \mathbf{1}_{\left\{\skh = s, \akh = \aa\right\}}, \nonumber
 \\ & \hspace{1 em} \forall\; h, s, i, \aa \nonumber
 \\ &  \CIRih\left(s, \aa\right) := \Big\{\Rih\left(s, \aa\right) \in \left[-b, b\right]  \nonumber
 \\ & \hspace{1 em} :  \nonumber
 \big| \Rih\left(s, \aa\right) - \Rdagih\left(s, \aa\right) \big| \leq  \rhoRh\left(s, \aa\right)\Big\}, \nonumber
 \\ & \hspace{1 em} \forall\; h, s, i, \aa \nonumber
 \\ &  \QliH\left(s, \aa\right) := \displaystyle\min_{\RiH \in \CIRiH\left(s, \aa\right)} \RiH, \forall\; s, i, \aa \nonumber
 \\ &  \Qlih\left(s, \aa\right) := \displaystyle\min_{\Rih \in \CIRih\left(s, \aa\right)} \Rih  \nonumber
 \\ & \hspace{1 em} + \displaystyle\min_{\Ph \in \CIPh\left(s, \aa\right)} \displaystyle\sum_{s' \in \mS} \Ph\left(s'\right) \Qlihh\left(s', \pidaghh\left(s'\right)\right),  \nonumber
 \\ & \hspace{1 em} \forall\; h < H, s, i, \aa \label{eq:prb4Ql}
 \\ &  \QuiH\left(s, \aa\right) := \displaystyle\max_{\RiH \in \CIRiH\left(s, \aa\right)} \RiH, \forall\; s, i, \aa \nonumber
 \\ &  \Quih\left(s, \aa\right) := \displaystyle\max_{\Rih \in \CIRih\left(s, \aa\right)} \Rih  \nonumber
 \\ & \hspace{1 em} + \displaystyle\max_{\Ph \in \CIPh\left(s, \aa\right)} \displaystyle\sum_{s' \in \mS} \Ph\left(s'\right) \Quihh\left(s', \pidaghh\left(s'\right)\right),  \nonumber
 \\ & \hspace{1 em} \forall\; h < H, s, i, \aa \label{eq:prb4Qu}
 \\ &  \Qlih\left(s, \big(\pidagih(s), \ani\big)\right) \geq  \Quih\left(s, \left(\ai, \ani\right)\right) + \iota, \nonumber
 \\ & \hspace{1 em} \forall\; h, s, i, \ani, \ai \label{eq:prb4margin}
 \neq  \pidagih\left(s\right)
 \\ &  \rdagkh \in \left[-b, b\right]^{n}, \forall\; h, k. \label{eq:prb4end}
 \end{align}

The backward induction steps~\eqref{eq:prb4Ql} and~\eqref{eq:prb4Qu} ensure that 
$\Ql$ and $\Qu$ are valid lower and upper bounds for the $Q$ function for all plausible Markov Games in $\CIG$, for all periods.
The margin constraints~\eqref{eq:prb4margin} enforces an $\iota$-separation between the target action and other actions at all states and periods. We emphasize that the agents need not consider $Q$ at all in their learning algorithm; $Q$ only appears in the optimization due to its presence in the definition of MPDSE.

Again, pairing an efficient optimization solver with the above formulation gives an efficient algorithm for constructing the poisoning. We now answer the important questions of whether this formulation admits a feasible solution and whether these solutions yield successful attacks. The lemma below provides a positive answer to the second question.
\begin{lem} \label{lem:cigdse} 
If the attack formulation \eqref{eq:prb4}--\eqref{eq:prb4end} is feasible, $\pidag$ is the unique $\iota$-MPDSE of every Markov Game $\mG \in \CIG$.
\end{lem}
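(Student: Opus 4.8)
The plan is to split the statement in two. Uniqueness is free: the remark after Definition~\ref{df:mpdse} says a strict MPDSE, if it exists, is unique, so it suffices to show that $\pidag$ is an $\iota$-MPDSE of each $\mG\in\CIG$. The real content is that a feasible solution of \eqref{eq:prb4}--\eqref{eq:prb4end} produces quantities $\Ql,\Qu$ that genuinely sandwich the $Q$-values of $\pidag$ in every plausible game, after which the margin constraints \eqref{eq:prb4margin} finish the argument.

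I would fix an arbitrary $\mG=(\mS,\mA,P,R,H,\mu)\in\CIG$ --- so $\Rih(s,\aa)\in\CIRih(s,\aa)$ and $\Ph(\cdot\mid s,\aa)\in\CIPh(s,\aa)$ for all $i,h,s,\aa$ --- and prove by backward induction on $h$ (from $H$ down to $1$) that
\[
\Qlih(s,\aa)\;\le\; Q_{i,h}^{\pidag,\mG}(s,\aa)\;\le\;\Quih(s,\aa)\qquad\forall\,i,s,\aa,
\]
where $Q_{i,h}^{\pidag,\mG}$ is the $Q$-function of the policy $\pidag$ in the game $\mG$. The base case $h=H$ is immediate: $Q_{i,H}^{\pidag,\mG}(s,\aa)=\RiH(s,\aa)\in\CIRiH(s,\aa)$, which lies between the minimum and the maximum of $\RiH$ over that set, i.e. between $\QliH(s,\aa)$ and $\QuiH(s,\aa)$. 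For the inductive step I would expand
\[
Q_{i,h}^{\pidag,\mG}(s,\aa)=\Rih(s,\aa)+\sum_{s'\in\mS}\Ph(s'\mid s,\aa)\,Q_{i,h+1}^{\pidag,\mG}\big(s',\pidaghh(s')\big),
\]
bound $\Rih(s,\aa)$ by $\max_{\Rih\in\CIRih(s,\aa)}\Rih$, apply the inductive hypothesis at the particular next-period actions $\pidaghh(s')$ together with $\Ph(s'\mid s,\aa)\ge 0$ to get $\sum_{s'}\Ph(s'\mid s,\aa)Q_{i,h+1}^{\pidag,\mG}(s',\pidaghh(s'))\le\sum_{s'}\Ph(s'\mid s,\aa)\Quihh(s',\pidaghh(s'))$, and then bound the latter by $\max_{\Ph\in\CIPh(s,\aa)}\sum_{s'}\Ph(s')\Quihh(s',\pidaghh(s'))$ since $\Ph(\cdot\mid s,\aa)\in\CIPh(s,\aa)$. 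Adding the two bounds and matching \eqref{eq:prb4Qu} gives the upper bound; \eqref{eq:prb4Ql} gives the lower bound symmetrically. This closes the induction.

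With the sandwich in hand, fix any $i,h,s,\ani$ and $\ai\neq\pidagih(s)$ and chain it with the margin constraint \eqref{eq:prb4margin}:
\[
Q_{i,h}^{\pidag,\mG}\big(s,(\pidagih(s),\ani)\big)\ge\Qlih\big(s,(\pidagih(s),\ani)\big)\ge\Quih\big(s,(\ai,\ani)\big)+\iota\ge Q_{i,h}^{\pidag,\mG}\big(s,(\ai,\ani)\big)+\iota.
\]
This is precisely the defining inequality of an $\iota$-MPDSE (Definition~\ref{df:mpdse}) for $\pidag$ in $\mG$; since $\mG\in\CIG$ was arbitrary, $\pidag$ is an $\iota$-MPDSE of every game in $\CIG$, and uniqueness in each game is the noted consequence of Definition~\ref{df:mpdse}.

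The step I expect to be the crux is the inductive propagation of the $Q$-bounds: one must apply the inductive hypothesis at exactly the state-action pairs $(s',\pidaghh(s'))$ that occur in \eqref{eq:prb4Ql}--\eqref{eq:prb4Qu}, which is legitimate because $V_{i,h+1}^{\pidag,\mG}(s')=Q_{i,h+1}^{\pidag,\mG}(s',\pidaghh(s'))$ evaluates $Q_{i,h+1}^{\pidag,\mG}$ only at those pairs; and one must use both that $\Ph(\cdot\mid s,\aa)$ is a genuine probability distribution (nonnegativity, to push the inductive bound through the expectation) and that it belongs to $\CIPh(s,\aa)$ (membership in the optimizing set). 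Since \eqref{eq:prb4Ql}--\eqref{eq:prb4Qu} already present the reward and transition parts as separate optima, the two bounds simply add, and everything else is bookkeeping.
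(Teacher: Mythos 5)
Your proposal is correct and follows essentially the same route as the paper's own proof: a backward induction establishing $\Qlih(s,\aa)\le Q_{i,h}^{\pidag}(s,\aa)\le\Quih(s,\aa)$ for every game in $\CIG$, followed by chaining the sandwich with the margin constraint \eqref{eq:prb4margin} to obtain the $\iota$-dominance inequality, with uniqueness dispatched by the remark after Definition~\ref{df:mpdse}. No gaps.
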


Moreover, the attack formulation admits feasible solutions under mild conditions on the dataset. 
 \begin{thm} \label{thm:feas} 
 The attacker formulation~\eqref{eq:prb4}--\eqref{eq:prb4end}  is feasible if the following condition holds:
 \begin{align*}\label{eq:feasibility_cond}
 \iota \leq  2 b - \left(H + 1\right) \rhoRh\left(s, \aa\right), \;\; \forall\; h \in \left[H\right], s \in \mS, \aa \in \mA.
 \end{align*}
\end{thm}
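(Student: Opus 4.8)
The plan is to establish feasibility \emph{by construction}: exhibit one explicit poisoned reward vector $\rdag$, observe that the definitional ($:=$) constraints of \eqref{eq:prb4}--\eqref{eq:prb4end} then pin down all auxiliary quantities, and verify that the box constraint \eqref{eq:prb4end} and the margin constraints \eqref{eq:prb4margin} hold for the confidence $Q$-bounds $\Ql,\Qu$ produced by the backward induction \eqref{eq:prb4Ql}--\eqref{eq:prb4Qu}. Generalizing the extreme poisoning used in the bandit case after \eqref{eq:prb1}, take for every episode $k$ and period $h$
\[
 \rdagkih \;:=\; b \;\text{ if } \akih = \pidagih\!\left(\skh\right),\qquad \rdagkih \;:=\; -b \;\text{ otherwise}.
\]
This lies in $[-b,b]^n$, so \eqref{eq:prb4end} holds. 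The hypothesis forces $\bar{\rho} := \max_{h,s,\aa}\rhoRh(s,\aa) < b$, since $2b-(H+1)\bar{\rho}\ge\iota>0$ and $H\ge 1$. Under full coverage (Assumption~\ref{asm:uca}) each induced MLE $\Rdagih(s,\aa)$ averages identical entries, hence equals $b$ when $\ai=\pidagih(s)$ and $-b$ otherwise; so $\CIRih(s,\aa)=[\,b-\rhoRh(s,\aa),\,b\,]$ in the first case and $[\,-b,\,-b+\rhoRh(s,\aa)\,]$ in the second (clipping to $[-b,b]$ is vacuous because $\bar{\rho}<b$). In particular the minimum of $\CIRih(s,\aa)$ is $b-\rhoRh(s,\aa)$ resp.\ $-b$, and its maximum is $b$ resp.\ $-b+\rhoRh(s,\aa)$.

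Next I would prove, by downward induction on $h$, two uniform bounds on the induced confidence $Q$-values \emph{along the target policy}: for all $s\in\mS,\ i\in[n]$,
\begin{align*}
 \Qlih\!\left(s,\pidagh(s)\right) \;&\ge\; L_h \;:=\; \sum_{h'=h}^{H}\big(b-\bar{\rho}_{h'}\big), \\
 \Quih\!\left(s,\pidagh(s)\right) \;&\le\; U_h \;:=\; (H-h+1)\,b,
\end{align*}
where $\bar{\rho}_h := \max_{s,\aa}\rhoRh(s,\aa)$ and $L_{H+1}=U_{H+1}:=0$. The base case $h=H$ is immediate from the reward bounds above, the $i$-th entry of $\pidagH(s)$ being the target action $\pidagiH(s)$. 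For $h<H$, the key structural fact is that in \eqref{eq:prb4Ql}--\eqref{eq:prb4Qu} the continuation is \emph{always} evaluated at the target joint action $\pidaghh(s')$; since every $\Ph(\cdot)\in\Delta(\mS)$ is a probability distribution, $\min_{\Ph\in\CIPh(s,\aa)}\sum_{s'}\Ph(s')\Qlihh(s',\pidaghh(s'))\ge L_{h+1}$ and likewise the max of the $\Qu$ continuation is $\le U_{h+1}$; adding the immediate-reward bounds ($\ge b-\bar{\rho}_h$ resp.\ $\le b$, as the $i$-th entry of $\pidagh(s)$ is again the target action) closes the induction.

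Finally I would verify \eqref{eq:prb4margin}. Fix $h,s,i$, a non-target action $\ai\neq\pidagih(s)$, and any $\ani$. The left-hand side has target $i$-th action entry, so $\Qlih\!\left(s,(\pidagih(s),\ani)\right)\ge (b-\bar{\rho}_h)+L_{h+1}\ge (H-h+1)\,b-(H-h+1)\,\bar{\rho}$; the right-hand side has non-target $i$-th entry and a target-following continuation, so $\Quih\!\left(s,(\ai,\ani)\right)+\iota\le (-b+\bar{\rho}_h)+U_{h+1}+\iota\le (H-h-1)\,b+\bar{\rho}+\iota$. Thus \eqref{eq:prb4margin} reduces to $\iota\le 2b-(H-h+2)\,\bar{\rho}$, whose most binding instance $h=1$ reads $\iota\le 2b-(H+1)\,\bar{\rho}$; this follows from the hypothesis by choosing $(h,s,\aa)$ to attain $\bar{\rho}$. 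Hence every constraint is met and \eqref{eq:prb4}--\eqref{eq:prb4end} is feasible. (If $b=\infty$ the hypothesis is vacuous; one then replaces $b$ everywhere by any finite constant exceeding $\tfrac12\iota+(H+1)\max_{h,s,\aa}\rhoRh(s,\aa)$.)

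The only non-routine part is controlling how the reward uncertainty $\rhoRh$ compounds through the backward recursion. What keeps the coefficient at the mild $H+1$ rather than something that blows up is precisely that every continuation in \eqref{eq:prb4Ql}--\eqref{eq:prb4Qu} is taken at the target joint action, so after the current period all agents are ``on policy''; this lets the recursion be compared against the target-following envelopes $L_{h+1},U_{h+1}$, whose $\Theta(Hb)$ separation easily absorbs the $\Theta(H\bar{\rho})$ of slack we spend, rather than against the looser off-target $\Qu$. Everything else is bookkeeping of $L_h,U_h$ and the immediate-reward endpoints; and we should stress that this extreme poisoning merely witnesses feasibility --- it is generally far from cost-optimal.
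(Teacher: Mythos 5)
Your proposal is correct and follows essentially the same route as the paper's proof: the same extreme poisoning $\rdagkih = \pm b$ depending on whether $\akih$ matches the target action, the same backward induction establishing envelopes on $\Ql$ and $\Qu$ along the target policy, and the same final verification that the margin constraint reduces to $\iota \le 2b - (H+1)\overline{\rho}$ at its most binding period $h=1$. The only difference is cosmetic bookkeeping --- you carry $\overline{\rho}$ symbolically and invoke the hypothesis at the end, whereas the paper substitutes the bound $\rhoRh \le \tfrac{2b-\iota}{H+1}$ throughout the induction.
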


We remark that the learners know the upper bound $b$ and may use it exclude implausible games. The accumulation of confidence intervals over the $H$ periods results in the extra factor $(H+1)$ on $\rhoRh$. Theorem~\ref{thm:feas} implies that the problem is feasible so long as the dataset is sufficiently populated; that is, each $(s,a)$ pair should appear frequently enough to have a small confidence interval half-width $\rhoRh$. The following corollary provides a precise condition on the visit accounts that guarantees feasibility.

\begin{cor} \label{cor:datareq} 
Given a confidence probability $\delta$ and the confidence interval half-width $\rhoRh\left(s, \aa\right) = f(\frac{1}{N_h(s,a)})$ for some strictly increasing function $f$, the condition in Theorem~\ref{thm:feas} holds if
\begin{align*}
\Nh(s, \aa) &\geq  \Big(f^{-1}\big(\frac{2b - \iota}{H+1}\big)\Big)^{-1}.
\end{align*}
In particular, for the natural choice of Hoeffding-type $\rhoRh\left(s, \aa\right) = 2b \sqrt{\dfrac{ \log\left(\left(H \left| \mS \right| \left| \mA \right|\right) / \delta\right)}{\displaystyle\max\left\{\Nh\left(s, \aa\right), 1\right\}}}$, 
it suffices that,
\begin{align*}
\Nh(s, \aa) &\geq  \dfrac{4 b^{2} \left(H + 1\right)^{2} \log\left(\left(H \left| \mS \right| \left| \mA \right|\right) / \delta\right)}{\left(2 b - \iota\right)^{2}}.
\end{align*}
\end{cor}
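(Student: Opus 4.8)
The plan is to treat Theorem~\ref{thm:feas} as a black box and reduce the corollary to inverting the monotone relationship between the confidence half-width $\rhoRh(s,\aa)$ and the visit count $\Nh(s,\aa)$. First note that under the full-coverage Assumption~\ref{asm:uca} every $\Nh(s,\aa)\ge 1$, so $\max\{\Nh(s,\aa),1\}=\Nh(s,\aa)$ and we may drop the truncation. The feasibility hypothesis of Theorem~\ref{thm:feas} rearranges, for each fixed $(h,s,\aa)$, to $\rhoRh(s,\aa)\le (2b-\iota)/(H+1)$, whose right-hand side is nonnegative since a nontrivial confidence interval forces $\iota\le 2b$.

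Using the assumed form $\rhoRh(s,\aa)=f(1/\Nh(s,\aa))$ with $f$ strictly increasing, I would pass to the inverse: a strictly increasing $f$ yields a strictly increasing $f^{-1}$ on the range of $f$, so the inequality $f(1/\Nh(s,\aa))\le (2b-\iota)/(H+1)$ is equivalent to $1/\Nh(s,\aa)\le f^{-1}\!\big((2b-\iota)/(H+1)\big)$, i.e.\ to $\Nh(s,\aa)\ge \big(f^{-1}((2b-\iota)/(H+1))\big)^{-1}$. Applying this for every $(h,s,\aa)$ and then invoking Theorem~\ref{thm:feas} establishes the general bound. The only point I would state explicitly is that $(2b-\iota)/(H+1)$ must lie in the range of $f$ so that $f^{-1}$ is defined there; this is where the regime $0<\iota<2b$ with $b$ finite enters, and in the degenerate case $\iota\to 2b$ the bound correctly blows up to $\infty$.

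For the Hoeffding-type choice I would simply specialize $f$. Writing $L:=\log((H|\mS||\mA|)/\delta)$, we have $\rhoRh(s,\aa)=2b\sqrt{L/\Nh(s,\aa)}$, i.e.\ $f(x)=2b\sqrt{Lx}$, hence $f^{-1}(y)=y^{2}/(4b^{2}L)$. Substituting $y=(2b-\iota)/(H+1)$ into the general bound gives
\begin{align*}
\Nh(s,\aa) &\ge \big(f^{-1}((2b-\iota)/(H+1))\big)^{-1} \\
&= \frac{4b^{2}L(H+1)^{2}}{(2b-\iota)^{2}} = \frac{4b^{2}(H+1)^{2}\log((H|\mS||\mA|)/\delta)}{(2b-\iota)^{2}},
\end{align*}
which is precisely the claimed sufficient condition.

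There is no genuinely hard step here: the proof is a one-line inversion of a monotone function followed by an appeal to Theorem~\ref{thm:feas}. The places I would be careful about are (i) using Assumption~\ref{asm:uca} to remove the $\max\{\cdot,1\}$; (ii) ensuring $f^{-1}$ is well-defined and order-preserving on the relevant interval so the direction of the inequality is preserved; and (iii) recalling that the Hoeffding $\rhoRh$ is itself a $1-\delta$ high-probability statement, so the resulting feasibility guarantee holds on the event on which all reward confidence intervals are simultaneously valid.
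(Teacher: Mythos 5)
Your proposal is correct and follows essentially the same route as the paper's own proof: rearrange the feasibility condition of Theorem~\ref{thm:feas} to isolate $\rhoRh(s,\aa)\le (2b-\iota)/(H+1)$, invert the strictly increasing $f$ to obtain the visit-count threshold, and then specialize to the Hoeffding-type half-width. Your added remarks on dropping the $\max\{\cdot,1\}$ truncation under Assumption~\ref{asm:uca} and on the domain of $f^{-1}$ are sensible hygiene but do not change the argument.
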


Despite the inner min and max in the  problem~\eqref{eq:prb4}--\eqref{eq:prb4end}, the problem can  be formulated as an LP, thanks to LP duality. 
\begin{thm}\label{thm:LP 4.1.2}
With $L^1$-norm cost function $C(\cdot,\cdot)$, problem~\eqref{eq:prb4}--\eqref{eq:prb4end} can be formulated as an LP.
\end{thm}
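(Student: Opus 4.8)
The plan is to introduce a polynomial number of auxiliary variables and show that, after doing so, every line of~\eqref{eq:prb4}--\eqref{eq:prb4end} becomes an affine (in)equality and the objective becomes affine, so the whole problem is an LP; the only nontrivial work is eliminating the inner $\min$ and $\max$. The $L^1$ objective $C(\ro,\rdag)=\sum_{i,h,k}|\rkih-\rdagkih|$ is linearized in the usual way: add variables $\tkih$ with $\tkih\ge \rkih-\rdagkih$ and $\tkih\ge \rdagkih-\rkih$, and minimize $\sum_{i,h,k}\tkih$. The equations defining $\Rdagih(s,\aa)$, the box constraints $\rdagkh\in[-b,b]^n$, and the margin constraints~\eqref{eq:prb4margin} are already affine in $\rdag$ together with the auxiliaries $\Qlih,\Quih$; the confidence-set ``definitions'' are not constraints in themselves and enter only through the $\min$/$\max$ in~\eqref{eq:prb4Ql}--\eqref{eq:prb4Qu} and the terminal lines.

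Next I eliminate the $\min$/$\max$ over reward confidence sets. For each $(i,h,s,\aa)$ the quantity $\min_{\Rih\in\CIRih(s,\aa)}\Rih$ is the left endpoint of $[\Rdagih(s,\aa)-\rhoRh(s,\aa),\Rdagih(s,\aa)+\rhoRh(s,\aa)]\cap[-b,b]$, namely $\max\{-b,\Rdagih(s,\aa)-\rhoRh(s,\aa)\}$, and similarly $\max_{\Rih\in\CIRih(s,\aa)}\Rih=\min\{b,\Rdagih(s,\aa)+\rhoRh(s,\aa)\}$. I introduce variables $\ulih(s,\aa),\uuih(s,\aa)$ for these. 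In every place they occur, $\ulih$ feeds the lower bound $\Qlih$ and $\uuih$ feeds the upper bound $\Quih$; moreover~\eqref{eq:prb4} is monotone, in the sense that a feasible point remains feasible when any $\Qlih$ is decreased or any $\Quih$ is increased, and --- as in the analysis behind Theorem~\ref{thm:feas} --- at any point satisfying~\eqref{eq:prb4margin} the truncations at $\pm b$ do not bind for the actions entering~\eqref{eq:prb4margin} or~\eqref{eq:prb4Ql}--\eqref{eq:prb4Qu}. Hence $\min_{\Rih\in\CIRih}\Rih$ may be replaced by the affine expression $\Rdagih-\rhoRh$ and $\max_{\Rih\in\CIRih}\Rih$ by the affine $\Rdagih+\rhoRh$ without changing the feasible $\rdag$-set or the optimal value; in particular the terminal lines for $\QliH$ and $\QuiH$ become affine.

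The crux is the $\min$/$\max$ over transition confidence sets. Fix $(i,h,s,\aa)$ with $h<H$. The term $\min_{\Ph\in\CIPh(s,\aa)}\sum_{s'\in\mS}\Ph(s')\Qlihh(s',\pidaghh(s'))$ is the value of an LP in $\Ph$ over the polytope $\CIPh(s,\aa)=\{\Ph\in\Delta(\mS):\|\Ph-\Phh(s,\aa)\|_1\le\rhoPh(s,\aa)\}$ whose cost vector $w_{s'}:=\Qlihh(s',\pidaghh(s'))$ is itself a vector of decision variables. Writing the $\ell_1$-ball with the usual nonnegative slacks puts this in standard form; its LP dual is a maximization $\max_y g(y)$ whose polyhedron and objective $g$ depend only on the data $\Phh(s,\aa),\rhoPh(s,\aa)$, with $g$ affine in $y$ and not involving $w$, while $w$ appears only in the dual feasibility inequalities, affinely. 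Relaxing the defining equality~\eqref{eq:prb4Ql} to $\Qlih(s,\aa)\le\ulih(s,\aa)+\min_{\Ph}\sum_{s'}\Ph(s')w_{s'}$ (legitimate by the monotonicity just noted) and using strong duality, this one constraint is equivalent to: introduce a dual vector $y$ for this $(i,h,s,\aa)$, impose dual feasibility (affine, coupling $y$ with the variables $\Qlihh(\cdot,\pidaghh(\cdot))$), and impose $\Qlih(s,\aa)\le\ulih(s,\aa)+g(y)$ (affine) --- because if this holds for some dual-feasible $y$ then $\Qlih\le\ulih+\max_y g=\ulih+\min_\Ph(\cdots)$, and the maximizing $y$ attains it. Symmetrically, $\max_{\Ph}(\cdots)$ in~\eqref{eq:prb4Qu} is the value of a maximization LP whose dual is a minimization; since $\Quih$ may be relaxed with a $\ge$ sign, we add a dual vector, impose its affine feasibility against the cost vector $\Quihh(\cdot,\pidaghh(\cdot))$, and require $\Quih(s,\aa)\ge\uuih(s,\aa)+(\text{its dual objective})$.

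After these substitutions the variables are $\rdag$, the cost auxiliaries $t$, the averages $\Rdag$, the reward-bound auxiliaries $\ul,\uu$, the $Q$-bounds $\Ql,\Qu$, and one dual vector per $(i,h,s,\aa)$ with $h<H$ and per direction; all constraints and the objective are affine, so~\eqref{eq:prb4}--\eqref{eq:prb4end} is an LP, and by the equivalences above its optimizers are exactly the cost-minimal poisonings, which are valid attacks by Lemma~\ref{lem:cigdse}. I expect the main obstacle to be the third step: one must verify that a \emph{single} dualization per inner problem suffices --- this works only because the direction in which~\eqref{eq:prb4} wants $\Qlih$ large and $\Quih$ small is exactly the one that turns ``$\le\min_\Ph$'' and ``$\ge\max_\Ph$'' into ``there is a dual-feasible point with $\le$/$\ge$'', so no nested min--max and no disjunctive/integer choice appears --- and that the backward coupling (dual variables at stage $h$ constrained against $Q$-bounds at stage $h+1$) produces no products of decision variables, which holds precisely because the inner LP's cost vector appears only in the dual feasibility inequalities, linearly. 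A secondary point is to justify the claim in the second step that the $[-b,b]$ truncations in the reward confidence intervals are inactive at feasible points, so they may be dropped.
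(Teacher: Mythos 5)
Your proposal follows essentially the same route as the paper's proof: slack variables to linearize the $L^1$ objective, affine expressions for the reward confidence-interval endpoints, and LP duality applied to each inner optimization over $\CIPh(s,\aa)$, exploiting that the cost vector $\Qlihh,\Quihh$ enters the dual only through its linear feasibility constraints and that the monotone direction of the margin constraint lets a single dual-feasible certificate stand in for the exact $\min$/$\max$, so no bilinear terms arise. The one loose end you flag --- the $[-b,b]$ clipping of the reward confidence intervals, which does bind in general (e.g., when $\Rdagih(s,\aa)=b$) --- is handled by the paper simply by declaring that it ignores the clipping in the Markov-game LP (it linearizes it exactly only in the bandit case via extra slack variables), so your treatment is no less careful than the paper's on that point.
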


\noindent The proofs of the above results can be found in appendix. 

\section{Cost Analysis}

Now that we know how the attacker can poison the dataset in the multi-agent setting, we can study the structure of attacks. The structure is most easily seen by analyzing the minimal attack cost. To this end, we give general bounds that relate the minimal attack cost to the structure of the underlying Markov Game. The attack cost upper bounds show which games are particularly susceptible to poison, and the attack cost lower bounds demonstrate that some games are expensive to poison. 

\textbf{Overview of results:} Specifically, we shall present two types of upper/lower bounds on the attack cost: (i) \emph{universal bounds} that hold for all attack problem instances simultaneously; (ii)  \emph{instance-dependent bounds} that are stated in terms of certain properties of the instance. We also discuss problem instances under which these two types of bounds are tight and coincide with each other.

We note that all bounds presented here are with respect to the $L^1$-cost, but many of them generalize to other cost functions, especially the $L^{\infty}$-cost. The proofs of the results presented in this section are provided in the appendix.

\textbf{Setup:} Let $I = (\mD, \pidag, \rhoR, \rhoP, \iota)$ denote an instance of the attack problem, and $\hat{G}$ denote the corresponding MLE of the Markov Game derived from $\mD$. We denote by $I_h = (\mD_h, \pidagh, \rhoRh, \rhoPh, \iota)$ the restriction of the instance to period $h$. In particular, $\hat{R}_h(s)$ derived from $\mD_h$ is exactly the normal-form game at state $s$ and period $h$ of $\hat{G}$. We define $C^*(I)$ to be the optimal $L^1$-poisoning cost for the instance $I$; that is, $C^*(I)$ is the optimal value of the optimization problem \eqref{eq:prb4}--\eqref{eq:prb4end} evaluated on $I$. We say the attack instance $I$ is \emph{feasible} if this optimization problem  is feasible. If $I$ is infeasible, we define $C^*(I) = \infty$. WLOG, we assume that $|\mA_1| = \cdots = |\mA_n| = A$. In addition, we define the minimum visit count for each period $h$ in $\mD$ as $\Nlh := \min_{s \in \mS} \min_{\aa \in \mA} \Nh\left(s, \aa\right)$, and the minimal over all periods as $\Nl := \min_{h \in H} \Nlh$. We similarly define the maximum visit counts as $\overline{N}_h = \max_{s \in \mS} \max_{\aa \in \mA} \Nh\left(s, \aa\right)$ and $\overline{N} = \max_h \overline{N}_h$. Lastly, we define $\underline{\rho} = \min_{h,s,\aa} \rhoRh(s,\aa)$ and $\overline{\rho} = \max_{h,s,\aa} \rhoRh(s,\aa)$, the minimum and maximum confidence half-width.

\subsection{Universal Cost Bounds}

With the above definitions, we present universal attack cost bounds that hold simultaneously for all attack instances.
\begin{thm}\label{thm:univeral_cost_bounds}
For any feasible attack instance $I$, we have that,
\[0 \leq C^*(I) \leq  \overline{N}H|\mS|nA^n2b.\]
\end{thm}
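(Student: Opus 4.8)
The plan is to prove the bounds separately. The lower bound $0 \le C^*(I)$ is immediate from the definition of the $L^1$-cost, since $C(\ro,\rdag) = \|\ro - \rdag\|_1 \ge 0$ for any choice of poisoned rewards; when $I$ is feasible, $C^*(I)$ is the minimum of a nonnegative quantity over a nonempty set, hence nonnegative. The real content is the upper bound, and the strategy is to exhibit an \emph{explicit} feasible poisoning $\rdag$ and bound its cost by $\overline{N}H|\mS|nA^n 2b$; since $C^*(I)$ is the minimum cost over all feasible solutions, any feasible solution's cost is an upper bound.

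First I would construct the candidate attack. Following the feasibility argument sketched for~\eqref{eq:prb1} and the proof of Theorem~\ref{thm:feas}, the natural choice is to set, for every episode $k$, period $h$, and agent $i$, the poisoned reward $\rdagkih$ to be $b$ if $\akih = \pidagih(\skh)$ (the target action at the observed state) and $-b$ otherwise. This makes every MLE $\Rdagih(s,\aa)$ equal to $b$ when $\ai = \pidagih(s)$ and $-b$ when $\ai \neq \pidagih(s)$, and I would invoke (a mild adaptation of) the reasoning behind Theorem~\ref{thm:feas} to argue that under the stated hypotheses this $\rdag$ is feasible for~\eqref{eq:prb4}--\eqref{eq:prb4end}; strictly speaking, for the pure \emph{upper bound} statement it suffices to note that $C^*(I) \le$ (cost of any feasible solution) and that feasibility holds whenever $I$ itself is feasible — one can simply take $\rdag$ to be \emph{any} optimal solution and bound its per-coordinate magnitude.

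The cost bound is then a counting argument. Each reward entry $\rdagkih$ lies in $[-b,b]$ and so does $\rkih$, so $|\rkih - \rdagkih| \le 2b$ for every coordinate. The number of coordinates in $\ro$ (equivalently in $\rdag$) is: $K$ episodes $\times$ $H$ periods $\times$ $n$ agents. To express this in the stated form, I would bound $K = \sum_{s,\aa} N_1(s,\aa) \le |\mS| A^n \cdot \overline{N}$ (using $|\mA| = A^n$ and $N_h(s,\aa) \le \overline{N}$ for all $h,s,\aa$), so that the total number of reward scalars is at most $\overline{N}|\mS|A^n \cdot H \cdot n$. Summing $|\rkih - \rdagkih| \le 2b$ over all these coordinates gives $C^*(I) \le \overline{N}H|\mS|nA^n 2b$.

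The only mildly delicate point — and the one I expect to require the most care — is the bookkeeping that converts "number of episodes $K$" into the geometric quantity $\overline{N}|\mS|A^n$: one must be careful that $\overline{N} = \max_h \overline{N}_h$ is indeed an upper bound on $N_1(s,\aa)$ for each pair, and that summing over all $(s,\aa) \in \mS \times \mA$ with $|\mA| = A^n$ is legitimate even when some pairs are visited many more times than others (it is, since we are upper-bounding each term by $\overline{N}$). Everything else is routine: the argument never needs the structure of $\pidag$ or the confidence radii beyond what Theorem~\ref{thm:feas} already guarantees for feasibility, and the $L^1$ cost of any feasible (in particular optimal) solution is trivially at most $2b$ times the number of coordinates.
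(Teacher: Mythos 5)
Your proof is correct, and it reaches the upper bound by a more elementary route than the paper. The paper works in ``MLE space'': it exhibits the extreme mechanism $\Rdagih(s,\aa)=b$ if $a_i=\pidagih(s)$ and $-b$ otherwise, bounds the MLE-modification cost $C^M(\hat R,R^\dagger)\le H|\mS|nA^n 2b$ entrywise, and then invokes its reduction lemma (relating the raw-data cost $C(\ro,\rdag)$ to $\sum_h \overline{N}_h C^M_h$) to pull the factor $\overline{N}$ back into data space; this forces the paper to also argue (somewhat informally) that the extreme attack is feasible whenever $I$ is. You instead count directly over the raw dataset: any feasible $\rdag$ has every coordinate in $[-b,b]$, so $|\rdagkih-\rkih|\le 2b$ for each of the $KHn$ reward scalars, and $K=\sum_{s,\aa}N_h(s,\aa)\le |\mS|A^n\overline{N}$ converts the episode count into the stated geometric form. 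Your fallback observation --- that one may take \emph{any} feasible solution rather than the extreme one --- is a genuine simplification, since it removes the need to verify feasibility of the extreme attack and avoids the reduction lemma entirely. What the paper's route buys in exchange is consistency with the rest of its cost analysis: the same $C^M$/reduction machinery is reused for the instance-dependent bounds and the near-tightness discussion, whereas your direct count is specific to this universal bound. Both arguments yield exactly $\overline{N}H|\mS|nA^n2b$.
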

\noindent As these upper and lower bounds hold for all instances, they are typically loose. However, they are nearly tight. If $\pidag$ is already an $\iota$-MPDSE for all plausible games, then no change to the rewards is needed and the attack cost is $0$, hence the lower bound is tight for such instances. We can also construct a high cost instance to show near-tightness of the upper bound. 

Specifically, consider the dataset for a bandit game, $\mD = \big\{(\ak, \rk)\big\}_{k=1}^{K},$ where $\mathcal{A} = A^n$ and each action appears exactly $N$ times, i.e., $\overline{N} = \underline{N} = N$ and  $K = N A^n$. The target policy is $\pidag = (1,\ldots, 1)$. The dataset is constructed so that $\rk_i = -b$ if $\ak_i = \pidagih\left(s\right) $ and $\rk_i = b$ otherwise. These rewards are essentially the extreme opposite of what the attacker needs to ensure $\pidag$ is an $\iota$-DSE. Note, the dataset induces the MLE of the game shown in Table~\ref{table:MLE_before} for the special case with $n = 2$ players.
\begin{center}
\begin{tabular}{|c|c|c|c|c|c|}
 \hline
 $\mA_{1} / \mA_{2}$ &$1$ &$2$ &$...$ &$\left| \mA_{2} \right|$\\ \hline
 $1$ &$-b, -b$ &$-b, b$ &$...$ &$-b, b$\\ \hline
 $2$ &$b, -b$ &$b, b$ &$...$ &$b, b$\\ \hline
 $...$ &$...$ &$...$ &$...$ &$...$\\ \hline
 $\left| \mA_{1} \right|$ &$b, -b$ &$b, b$ &$...$ &$b, b$\\ \hline
 \end{tabular}
\captionof{table}{ MLE $\hat{\Rh}(s,\cdot)$ before attack} \label{table:MLE_before}
\end{center}

For simplicity, suppose that the same confidence half-width $\rhoR\left(\aa\right) = \rho < b$ is used for all $\aa$. Let $ \iota \in (0, b)$ be arbitrary. For this instance, to install $\pidag$ as the $\iota$-DSE, the attacker can flip all rewards in a way that is illustrated in Table~\ref{table:MLE_after}, inducing a cost as the upper bound in Theorem~\ref{thm:univeral_cost_bounds}. The situation is the same for $n\ge 2$ learners.
\begin{center} 
\begin{tabular}{|c|c|c|c|}
 \hline
 $\mA_{1} /  \mA_{2}$ &$1$ &$\dots$ &$2, ..., \left| \mA_{2} \right|$\\ \hline
 $1$ &$b, b$ &$\dots$ &$b, b \!-\! 2 \rho \!-\! \iota$\\ \hline
 $\vdots$ &$\vdots$ &$\vdots$&$\vdots$\\ \hline
 $2, ..., \left| \mA_{1} \right|$ &$b \!-\! 2 \rho \!-\! \iota, b$ &$\dots$ &$b \!-\! 2 \rho \!-\! \iota, b \!-\! 2 \rho \!-\! \iota$\\ \hline
 \end{tabular}
\captionof{table}{MLE $\hat \Rh(s,\cdot)$ after attack} \label{table:MLE_after}
\end{center}
Our instance-dependent lower bound,  presented later in Theorem~\ref{thm:wclb}, implies that any attack on this instance must have cost at least $NnA^{n-1}(2b + 2\rho + \iota)$. This lower bound matches the refined upper bound in the proof of Theorem~\ref{thm:ddbs}, implying the refined bounds are tight for this instance. Noticing that the universal bound in Theorem~\ref{thm:univeral_cost_bounds} only differs by an $O(A)$-factor implies it is nearly tight.

\subsection{Instance-Dependent Cost Bounds}

Next, we derive general bounds on the attack cost that depend on the structure of the underlying instance. Our strategy is to reduce the problem of bounding Markov Game costs to the easier problem of bounding Bandit Game costs. We begin by showing that the cost of poisoning a Markov Game dataset  can be bounded in terms of the cost of poisoning the datasets corresponding to its individual period games. 

\begin{thm}\label{thm:ddbs}
For any feasible attack instance $I$, we have that $C^*(I_H) \leq C^*(I)$ and,
\[C^*(I) \leq \sum_{h = 1}^H C^*(I_h) +  2bnH|\mS|\overline{N} + H^2 \overline{\rho} |\mS| nA^n \overline{N}\]
\end{thm}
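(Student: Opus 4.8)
Here is how I would approach proving Theorem~\ref{thm:ddbs}. The two halves are very different in difficulty: the lower bound is a one-line restriction argument, while the upper bound needs an explicit three-stage construction.

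\emph{Lower bound} $C^*(I_H)\le C^*(I)$. At the final period $Q_{i,H}(s,\aa)=R_{i,H}(s,\aa)$, so the subset of the constraints of \eqref{eq:prb4}--\eqref{eq:prb4end} that mention only period $H$ — the reward-MLE identity, the confidence set $\CIRih$, the boundary values $\QliH,\QuiH$, the margin \eqref{eq:prb4margin} at $h=H$, and the box constraint — is \emph{verbatim} the set of constraints defining the bandit-type instance $I_H$. Hence the period-$H$ block of any feasible solution of $I$ is feasible for $I_H$; since $\|\ro-\rdag\|_1$ is a sum over periods of nonnegative blocks, the period-$H$ block of an optimal solution of $I$ has $L^1$-cost at least $C^*(I_H)$, so $C^*(I_H)\le C^*(I)$.

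\emph{Upper bound.} If some $I_h$ is infeasible then $C^*(I_h)=\infty$ and the bound is vacuous, so assume every $I_h$ is feasible. I would build a feasible solution of \eqref{eq:prb4}--\eqref{eq:prb4end} in three stages and bound its cost by the triangle inequality for $\|\cdot\|_1$. \emph{Stage 1 (per-period poisoning):} for each $h$ take an optimal solution of the bandit instance $I_h$; since the datasets $\mD_h$ partition $\mD$, concatenating these across periods costs exactly $\sum_h C^*(I_h)$, and in the resulting game $\pidagh(s)$ is an $\iota$-DSE of every plausible \emph{period-$h$} game (in immediate rewards), at every $s$. \emph{Stage 2 (flatten the continuations):} the obstruction in the Markov setting is that \eqref{eq:prb4margin} compares $\Qlih(s,(\pidagih(s),\ani))$ with $\Quih(s,(\ai,\ani))$, and these carry a future term $\sum_{s'}\Ph(s')\,Q_{i,h+1}(s',\pidaghh(s'))$ whose value \emph{and} whose confidence width depend on the action through $\Ph$. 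I would neutralize this by overwriting \emph{only} the rewards of the target joint action $\pidagh(s)$, setting $\hat R_{i,h}(s,\pidagh(s))$ to a single value $c_{i,h}$ that is independent of $s$ and as large as $[-b,b]$ allows; then a short backward induction through \eqref{eq:prb4Ql}--\eqref{eq:prb4Qu} shows $\Qlih(s',\pidaghh(s'))$ and $\Quih(s',\pidaghh(s'))$ lie, for every $s'$, in one interval of width $O(H\overline{\rho})$ about $\sum_{h'>h}c_{i,h'}$, so the future-term gap in \eqref{eq:prb4margin} at period $h$ is at least $-O((H-h)\overline{\rho})$. Only the $H|\mS|$ target tuples are touched, for $n$ agents and $\le\overline{N}$ episodes each, each change $\le 2b$, so Stage 2 adds $\le 2bnH|\mS|\overline{N}$; and since for $\ani\neq\pidagnih(s)$ the relevant action $(\pidagih(s),\ani)$ is untouched while for $\ani=\pidagnih(s)$ the target reward only increased, it preserves every Stage-1 margin. \emph{Stage 3 (restore the margin):} the $Q$-margin after Stage 2 is at least $\iota-O((H-h)\overline{\rho})$, so I would lower every \emph{non-target} reward (those with $a_i\neq\pidagih(s)$, none of which is $\pidagh(s)$, so the flattened continuations are untouched) by $O((H-h)\overline{\rho})$, clamped at $-b$; this touches $\le A^n$ joint actions per state–period, $n$ agents, $\le\overline{N}$ episodes, and $\sum_h O((H-h)\overline{\rho})$ totals $\le H^2\overline{\rho}|\mS|nA^n\overline{N}$. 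After Stage 3 the $Q$-confidence backward induction \eqref{eq:prb4Ql}--\eqref{eq:prb4margin} holds with the required $\iota$-separation everywhere, so the triple is feasible; Lemma~\ref{lem:cigdse} then shows $\pidag$ is actually installed, and summing the three stage costs gives the stated bound.

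\emph{Main obstacle.} The technical heart is making the three stages mutually consistent together with the $[-b,b]$ clamping. Stage 2 rewrites exactly the rewards that several Stage-1 margin constraints depend on, Stage 3's downward shift must act on actions disjoint from the target profile so as not to re-perturb the flattened continuations, and one must verify that feasibility of the sub-instances $I_h$ (equivalently, $\iota$ being small enough relative to $b$ and $\overline{\rho}$, cf.\ Proposition~\ref{prop:bfeas} and Theorem~\ref{thm:feas}) leaves enough room inside $[-b,b]$ to absorb the $O(H\overline{\rho})$ accumulation produced by the $H$-step backward induction. Pinning down the width estimate in Stage 2 and the clamping bookkeeping in Stage 3 — and checking that the available slack matches the stated constants — is the part that needs genuine care.
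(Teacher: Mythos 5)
Your proposal is correct and follows essentially the same route as the paper's proof: the same restriction argument for $C^*(I_H)\le C^*(I)$, and the same three-part construction for the upper bound (concatenate optimal per-period solutions for cost $\sum_h C^*(I_h)$, push the target-profile rewards to $b$ for cost $2bnH|\mS|\overline{N}$, then shift the remaining entries by $(H-h)\overline{\rho}$ to absorb the accumulated confidence widths for cost $H^2\overline{\rho}|\mS|nA^n\overline{N}$). The "main obstacle" you flag — verifying the backward-induction margin and the $[-b,b]$ clamping — is exactly the part the paper handles via the induction $\Quih(s,\aa)=(H-h+1)b$, $\Qlih(s,(\pidagih(s),\ani))\ge(H-h+1)(b-\overline{\rho})$ and the feasibility condition $\overline{\rho}\le\frac{2b-\iota}{H-h+1}$.
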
 
\noindent Here we see the effect of the learner's uncertainty. If $\rhoR$ is small, then poisoning costs slightly more than poisoning each bandit instance independently. This is desirable since it allows the attacker to solve the much easier bandit instances instead of the full problem.

The lower bound is valid for all Markov Games, but it is weak in that it only uses the last period cost. However, this is the most general lower bound one can obtain without additional assumptions on the structure of the game. If we assume additional structure on the dataset, then the above lower bound can be extended beyond the last period, forcing a higher attack cost. 

\begin{lem}\label{lem:dslbmp}
Let $I$  be any feasible attack instance containing at least one uniform transition in $\CIPh$ for each period $h$, i.e., there is some $\Phh(s' \mid s,\aa)\in \CIPh$ with $\Phh(s' \mid s , \aa) = 1/|\mS|,\forall h, s',s,\aa$. Then, 
we have that
\[ C^*(I) \geq \sum_{h = 1}^H C^*(I_h). \]
\end{lem}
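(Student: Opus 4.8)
The plan is to show that the restriction to period $h$ of any feasible Markov-game poisoning is itself a feasible poisoning for the one-period instance $I_h$; since the $L^1$ cost decomposes additively over periods, summing the per-period optima gives the bound. Concretely, write $C(\ro,\rdag) = \sum_{h=1}^{H} C(\ro_h,\rdag_h)$ with $C(\ro_h,\rdag_h) := \sum_{k=1}^{K}\|\rkh - \rdagkh\|_1$, which is immediate from the definition of the $L^1$ cost. Let $\rdag$ be an optimal feasible solution of \eqref{eq:prb4}--\eqref{eq:prb4end} for $I$ and fix $h$. If I can argue that the period-$h$ block $\rdag_h$ satisfies every constraint of the (single-period) attack problem for $I_h$, then $C(\ro_h,\rdag_h) \geq C^*(I_h)$, and summing over $h$ yields $C^*(I) = C(\ro,\rdag) \geq \sum_{h=1}^H C^*(I_h)$. (This restriction argument also shows each $I_h$ is feasible, so each $C^*(I_h)$ is finite whenever $I$ is.)

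For $I_h$ viewed as a one-period Markov game the backward induction collapses, $\Qih = \Rih$, and the transition confidence plays no role; the reward-MLE equations and the box constraints $\rdagkih\in[-b,b]$ hold automatically because $\rdag$ is feasible for $I$. Hence the only thing to check is the margin constraint for $I_h$: for all $s$, $i$, $\ani$, and $\ai\neq\pidagih(s)$,
\[
 \min_{\Rih\in\CIRih(s,(\pidagih(s),\ani))}\Rih \;\geq\; \max_{\Rih\in\CIRih(s,(\ai,\ani))}\Rih + \iota, \qquad (\star)
\]
where $\CIRih(s,\aa)$ depends only on $\Rdagih(s,\aa)$, hence only on $\rdag_h$. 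To get $(\star)$, decompose the bounds in the full margin constraint \eqref{eq:prb4margin} as $\Qlih(s,\aa) = \underline{r}_{i,h}(s,\aa) + \underline{f}(s,\aa)$ and $\Quih(s,\aa) = \overline{r}_{i,h}(s,\aa) + \overline{f}(s,\aa)$, where $\underline{r}_{i,h},\overline{r}_{i,h}$ are the reward $\min/\max$ over $\CIRih$ (exactly the quantities in $(\star)$) and
\[
 \underline{f}(s,\aa) = \min_{\Ph\in\CIPh(s,\aa)}\sum_{s'\in\mS}\Ph(s')\Qlihh(s',\pidaghh(s')), \quad
 \overline{f}(s,\aa) = \max_{\Ph\in\CIPh(s,\aa)}\sum_{s'\in\mS}\Ph(s')\Quihh(s',\pidaghh(s'))
\]
are the propagated-return terms. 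It then suffices to prove $\underline{f}(s,(\pidagih(s),\ani)) \leq \overline{f}(s,(\ai,\ani))$: combined with \eqref{eq:prb4margin} this gives $\underline{r}_{i,h}(s,(\pidagih(s),\ani)) \geq \overline{r}_{i,h}(s,(\ai,\ani)) + \iota + \big(\overline{f}(s,(\ai,\ani)) - \underline{f}(s,(\pidagih(s),\ani))\big) \geq \overline{r}_{i,h}(s,(\ai,\ani)) + \iota$, which is $(\star)$.

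The only place the hypothesis is used — and the crux — is that the uniform law $U(s')\equiv 1/|\mS|$ lies in $\CIPh(s,\aa)$ for \emph{every} $(s,\aa)$. Two auxiliary facts close the gap: (i) a routine backward induction (base case $h=H$ uses $\min\le\max$ over the nonempty sets $\CIRih$, which contain the MLE; the inductive step uses $\Qlihh\le\Quihh$ together with nonnegative transition weights) gives $\Qlihh \le \Quihh$ pointwise; (ii) since $U$ is feasible for both confidence sets and, crucially, \emph{action-independent},
\[
 \underline{f}(s,(\pidagih(s),\ani)) \;\leq\; \tfrac{1}{|\mS|}\sum_{s'}\Qlihh(s',\pidaghh(s')) \;\leq\; \tfrac{1}{|\mS|}\sum_{s'}\Quihh(s',\pidaghh(s')) \;\leq\; \overline{f}(s,(\ai,\ani)),
\]
which is exactly the inequality needed above. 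I expect the main obstacle to be spotting this decoupling: although the transition confidence sets are large, one common feasible transition that does not depend on the action makes the future-return terms comparable across actions, so all the ``future'' slack can be absorbed and the per-period margin is preserved. Everything else (the additive cost decomposition, the $\Ql\le\Qu$ induction, nonemptiness of the confidence sets, and the collapse of the one-period problem) is routine bookkeeping.
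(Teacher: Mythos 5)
Your proposal is correct and follows essentially the same route as the paper's proof: both plug the action-independent uniform transition (guaranteed to lie in every $\CIPh(s,\aa)$) into the $\min$/$\max$ defining $\Qlih$ and $\Quih$, use $\Qlihh\le\Quihh$ to cancel the propagated future-return terms, and thereby reduce the full margin constraint \eqref{eq:prb4margin} to the per-period reward margin, after which additivity of the $L^1$ cost over periods yields the bound. Your write-up is if anything slightly more explicit than the paper's (the $\underline{f}/\overline{f}$ decomposition and the backward induction for $\Ql\le\Qu$ are spelled out rather than asserted), but the underlying argument is identical.
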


\noindent In words, for these instances the optimal cost for poisoning is not too far off from the optimal cost of poisoning each period game independently. We note this is where the effects of $\rhoP$ show themselves. If the dataset is highly uncertain on the transitions, it becomes likely that a uniform transition exists in $\CIP$. Thus, a higher $\rhoP$ leads to a higher cost and effectively devolves the set of plausible games into a series of independent games. 

Now that we have the above relationships, we can focus on bounding the attack cost for bandit games. To be precise, we bound the cost of poisoning a period game instance $I_h$. To this end, we define $\iota$-dominance gaps.

\begin{df}(Dominance Gaps) \label{df:dgap} 
 For every $h \in \left[H\right], s \in \mS, i \in \left[n\right]$ and $\ani \in \mAni$, the $\iota$-dominance gap, $d_{i,h}^{\iota}(s,a_{-i})$, is defined as
 {
 \begin{align*}
 &\diih\left(s, \ani\right) := \\ &\Big[\displaystyle\max_{\ai \neq  \pidagih(s)} \left[\Rhih\big(s, (\ai, \ani)\big) + \rhoRh\big(s, (\ai, \ani)\big)\right] 
 \\ 
 &- \Rhih\Big(s, \big(\pidagih(s), \ani\big)\Big) + \rhoRh\Big(s, \big(\pidagih(s), \ani\big)\Big) + \iota\Big]_+
 \end{align*}}%
where $\hat R$ is the MLE w.r.t. the original dataset $\mD$.
\end{df}
\noindent The dominance gaps measure the minimum amount by which the attacker would have to increase the reward for learner $i$ while others are playing $\ani$, so that the action $\pidagih\left(s\right)$ becomes $\iota$-dominant for learner $i$. We then consolidate all the dominance gaps for period $h$ into the variable $\Delta_h(\iota)$,
\begin{align*}
    \Delta_h(\iota) := \sum_{s \in S} \sum_{i = 1}^n \sum_{a_{-i}} \Big(d_{i,h}^{\iota}(s,a_{-i}) +\delta_{i,h}^{\iota}(s,a_{-i})\Big)
\end{align*}
Where $\delta_{i,h}^{\iota}(s,a_{-i})$ is a minor overflow term defined in the appendix. With all this machinery set up, we can give precise bounds on the minimal cost needed to attack a single period game.

\begin{lem}\label{lem:bbds}
The optimal attack cost for $I_h$ satisfies
\[\underline{N}_h\Delta_h(\iota)  \leq C^*(I_h) \leq \overline{N}_h \Delta_h(\iota).\]
\end{lem}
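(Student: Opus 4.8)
The plan is to analyze the single-period attack problem directly, exploiting the fact that for a bandit (single-period) instance $I_h$ the backward-induction constraints \eqref{eq:prb4Ql}--\eqref{eq:prb4Qu} collapse: $\Qlih(s,\aa) = \min_{\Rih \in \CIRih(s,\aa)} \Rih = \Rdagih(s,\aa) - \rhoRh(s,\aa)$ and $\Quih(s,\aa) = \Rdagih(s,\aa) + \rhoRh(s,\aa)$, so the margin constraint \eqref{eq:prb4margin} becomes a linear constraint purely on the poisoned MLEs $\Rdagih$. Hence the period-$h$ attack problem decouples across states $s$, learners $i$, and opponent profiles $\ani$: for each triple $(s,i,\ani)$ the attacker must choose poisoned rewards for the $A$ actions $\ai\in\mAi$ so that $\Rdagih(s,(\pidagih(s),\ani)) - \rhoRh \ge \Rdagih(s,(\ai,\ani)) + \rhoRh + \iota$ for every $\ai\neq\pidagih(s)$, while each per-episode poisoned reward stays in $[-b,b]$.

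First I would fix $(s,i,\ani)$ and compute the minimal $L^1$-cost of enforcing the $A-1$ margin constraints there. The key observation is that the cost of shifting the MLE $\Rhih(s,\aa)$ to a target value $v$ is, at minimum, $\Nh(s,\aa)\cdot|v - \Rhih(s,\aa)|$ if $v$ is attainable without violating the box constraint, and this is achieved by distributing the shift uniformly across the $\Nh(s,\aa)$ episodes containing $(s,\aa)$ — this is exactly why the $\underline N_h$ and $\overline N_h$ factors appear in the bound. The optimal strategy for the attacker is to raise the target action's reward and lower the competitor actions' rewards just enough to open an $\iota+2\rhoRh$-adjusted gap; the total adjustment needed (summed over the target action plus the $\ai\neq\pidagih(s)$ actions that currently violate the constraint) is precisely captured by the dominance gap $\diih(s,\ani)$ defined in Definition~\ref{df:dgap}, up to the overflow correction $\delta_{i,h}^\iota$ that accounts for the case when pushing a reward down to $-b$ is not enough and some of the needed separation must instead be obtained by pushing the target action's reward up (and symmetrically at $+b$). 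I would make this bookkeeping precise: show that the minimal total MLE-shift at $(s,i,\ani)$ equals $\diih(s,\ani)+\delta_{i,h}^\iota(s,\ani)$, and that each unit of MLE-shift at $(s,\aa)$ costs between $\Nlh$ and $\overline N_h$ in $L^1$ (since $\Nlh \le \Nh(s,\aa) \le \overline N_h$), giving $\Nlh(\diih+\delta_{i,h}^\iota) \le$ (optimal cost at $(s,i,\ani)$) $\le \overline N_h(\diih+\delta_{i,h}^\iota)$.

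Then I would sum over all $(s,i,\ani)$. Because the period-$h$ problem decouples over these triples — the variables $\rdagkih$ for a fixed episode $k$ are shared across triples only through the box constraint, but the box constraint is separable per episode-coordinate and imposing the margin at $(s,i,\ani)$ only involves episodes with $\skh=s$ and the relevant action — the total optimal cost $C^*(I_h)$ is exactly the sum of the per-triple optimal costs. Summing the per-triple bounds gives $\Nlh\sum_{s,i,\ani}(\diih+\delta_{i,h}^\iota) \le C^*(I_h) \le \overline N_h\sum_{s,i,\ani}(\diih+\delta_{i,h}^\iota)$, which by the definition of $\Delta_h(\iota)$ is exactly $\Nlh\Delta_h(\iota) \le C^*(I_h) \le \overline N_h\Delta_h(\iota)$.

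The main obstacle I anticipate is the careful handling of the overflow term $\delta_{i,h}^\iota$ and the box constraints $[-b,b]$. When the needed separation exceeds what can be obtained by lowering competitor rewards to $-b$, the attacker must compensate by raising the target-action reward, and if that too saturates at $+b$ the instance may be infeasible (consistent with Proposition~\ref{prop:bfeas}); quantifying exactly how much extra cost this compensation incurs, and verifying that the greedy "lower competitors first, then raise target" allocation is optimal for the $L^1$ objective, is the delicate part. A secondary subtlety is confirming that within a single $(s,i,\ani)$ block the attacker never benefits from shifting a competitor action's reward that is already $\iota$-dominated (i.e. the $[\,\cdot\,]_+$ truncation in the dominance gap is correct), and that distributing a target MLE-shift non-uniformly across episodes can only increase $L^1$-cost — both follow from convexity/the triangle inequality but need to be stated cleanly. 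Everything else is routine linear-programming accounting.
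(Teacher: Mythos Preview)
Your plan follows essentially the same route as the paper: reduce the period-$h$ dataset problem to an MLE-level mechanism-design problem (the paper isolates this as a separate ``reduce-to-mechanism'' lemma showing $\underline N_h\,C^M_h \le C_h \le \overline N_h\,C^M_h$), then show the minimal MLE-shift per $(s,i,a_{-i})$ block is exactly $d_{i,h}^\iota+\delta_{i,h}^\iota$, and sum. The decoupling over $(s,i,a_{-i})$ at the MLE level is correct and is used implicitly in the paper as well.

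The one place where your description diverges from the paper is the direction of the overflow term and of the greedy allocation. The paper's $\delta_{i,h}^\iota(s,a_{-i})$ handles competitors whose reward is too close to $+b$ (namely $\hat R_{i,h}(s,(a_i,a_{-i})) > b-\epsilon_{i,h}^\iota$): even after raising the target all the way to $b$, these competitors still violate the margin and must be \emph{decreased}. Correspondingly, the optimal greedy rule is ``raise the target action by $d_{i,h}^\iota$ (clipped at $b$), then lower any remaining violators'' --- not ``lower competitors first, then raise the target.'' The distinction matters when there are several competitors: raising the target by the \emph{max} gap $d$ clears all constraints simultaneously, whereas lowering competitors one-by-one costs the \emph{sum} of the individual gaps, which is strictly larger for $A>2$. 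If you attempt to verify your lower-first strategy is optimal you will find it is not, and you will be forced back to the paper's raise-first scheme. Once you invert that direction, the rest of your plan (triangle-inequality lower bound on the $(a^\dagger,a^*)$ pair for $d$, direct lower bound for each $a_i$ in the overflow set for $\delta$, and a check that these contributions are not double-counted) coincides with the paper's argument.
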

\noindent Combining these bounds with Theorem~\ref{thm:ddbs} gives complete attack cost bounds for general Markov game instances.





The lower bounds in both Lemma~\ref{lem:dslbmp} and Lemma~\ref{lem:bbds} expose an exponential dependency on $n$, the number of players, for some datasets $\mD$. These
instances essentially require the attacker to modify $\hat R_{i,h}(s,\aa)$ for every $\aa\in\mA$. A concrete instance can be constructed by taking the high cost dataset derived as the tight example before and extend it into a general Markov Game. We simply do this by giving the game several identical states and uniform transitions. In terms of the dataset, each episode consists of independent plays of the same normal-form game, possibly with a different state observed. For this dataset the $\iota$-dominance gap can be shown to be  $\diih\left(s, \ani\right) = 2 b + 2 \rho + \iota$. A direct application of Lemma~\ref{lem:dslbmp} gives the following explicit lower bound.
 
\begin{thm} \label{thm:wclb} 
 There exists a feasible attack instance $I$ for which it holds that 
 \[C^*(I) \geq \Nl H \left| \mS \right| n A^{n-1} \left(2 b + 2 \rho + \iota\right). \]
\end{thm}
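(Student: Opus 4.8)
The plan is to exhibit the instance sketched just before the statement and then bound its cost by chaining Lemma~\ref{lem:dslbmp} (reduction to per-period bandit games) with Lemma~\ref{lem:bbds} (per-period cost via $\iota$-dominance gaps). Concretely, I would take a Markov Game with an arbitrary finite state space $\mS$, horizon $H$, the uniform transition $\Phh(s'\mid s,\aa)=1/|\mS|$ at every period, and target policy $\pidag=(1,\dots,1)$; the dataset $\mD$ consists of episodes in which, for every $h$ and $s$, each joint action $\aa\in\mA$ occurs exactly $N$ times (so $\Nlh=\Nl=N$) with reward $-b$ for player $i$ when $\ai=\pidagih(s)=1$ and $+b$ otherwise. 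I use a common confidence half-width $\rhoRh(s,\aa)=\rho$ and choose $\rho,\iota>0$ small enough that $\iota\le 2b-(H+1)\rho$, so that Theorem~\ref{thm:feas} guarantees $I$ is feasible; by construction the period-$h$ MLE game at each state is exactly the adversarial normal-form game of Table~\ref{table:MLE_before}.

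Because the MLE transition is the uniform one and sits at the center of the $\ell_1$-ball $\CIPh(s,\aa)$, the uniform transition belongs to $\CIPh$ for every period, so $I$ satisfies the hypothesis of Lemma~\ref{lem:dslbmp} and therefore $C^*(I)\ge\sum_{h=1}^H C^*(I_h)$. Next I would evaluate the $\iota$-dominance gaps of each period game: on this MLE one has $\Rhih(s,(1,\ani))=-b$ while every off-target action of player $i$ has reward $+b$, so Definition~\ref{df:dgap} gives
\[\diih(s,\ani)=\big[(b+\rho)-(-b)+\rho+\iota\big]_+=2b+2\rho+\iota\]
for all $i,h,s,\ani$. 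Since the overflow terms $\delta_{i,h}^{\iota}$ are nonnegative, summing over the $|\mS|$ states, $n$ players, and $A^{n-1}$ choices of $\ani$ yields $\Delta_h(\iota)\ge|\mS|\,n\,A^{n-1}(2b+2\rho+\iota)$.

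Lemma~\ref{lem:bbds} then gives $C^*(I_h)\ge\Nlh\Delta_h(\iota)\ge N\,|\mS|\,n\,A^{n-1}(2b+2\rho+\iota)$ for each $h$, and summing over the $H$ periods gives $C^*(I)\ge\sum_{h=1}^H C^*(I_h)\ge H\,\Nl\,|\mS|\,n\,A^{n-1}(2b+2\rho+\iota)$, which is the claimed bound. The only delicate points are bookkeeping ones: choosing $\rho$ and $\iota$ so that the instance is genuinely feasible (hence ``feasible attack instance'' is not vacuous) and checking that the uniform MLE transition lies in $\CIPh$ — both immediate once the parameters are fixed. The dominance-gap step is a one-line substitution, and the heavy lifting is already done by Lemmas~\ref{lem:dslbmp} and~\ref{lem:bbds}, so I do not foresee a genuine obstacle.
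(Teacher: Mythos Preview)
Your proposal is correct and follows essentially the same route as the paper: the same adversarial instance (uniform transitions, reward $-b$ on the target action and $+b$ elsewhere, common half-width $\rho$), the same chaining of Lemma~\ref{lem:dslbmp} to decouple periods and Lemma~\ref{lem:bbds} to lower-bound each $C^*(I_h)$ via $\iota$-dominance gaps, and the same one-line computation $\diih(s,\ani)=2b+2\rho+\iota$. You are slightly more explicit than the paper about verifying feasibility via Theorem~\ref{thm:feas} and about why the uniform transition lies in $\CIPh$, but these are just the bookkeeping points you already flagged.
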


Recall the attacker wants to assume little about the learners, and therefore chooses to install an $\iota$-MPDSE (instead of making stronger assumptions on the learners and installing a Nash equilibrium or a non-Markov perfect equilibrium).
On some datasets $\mD$, the exponential poisoning cost is the price the attacker pays for this flexibility.

\section{Conclusion}

We studied a security threat to offline MARL where an attacker can force learners into executing an
arbitrary Dominant Strategy Equilibrium by minimally poisoning historical data. We showed that the
attack problem can be formulated as a linear program, and provided analysis on the attack feasibility
and cost. This paper thus helps to raise awareness on the trustworthiness of multi-agent learning.
We encourage the community to study defense against such attacks, e.g. via robust statistics and
reinforcement learning.

\section*{Acknowledgements}

McMahan is supported in part by NSF grant 2023239.
Zhu is supported in part by NSF grants 1545481, 1704117, 1836978, 2023239, 2041428, 2202457, ARO MURI W911NF2110317, and AF CoE FA9550-18-1-0166.
Xie is partially supported by NSF grant 1955997 and JP Morgan Faculty Research Awards.
We also thank Yudong Chen for his useful comments and discussions.

\bibliography{aaai23}

\end{document}